%% file: main.tex
\documentclass[11pt,letterpaper]{article}

\usepackage[T1]{fontenc}
\usepackage{amsmath,amssymb,amsthm,mathtools}
\usepackage{array,booktabs}
\usepackage{microtype}
\usepackage{graphicx}
\usepackage{algorithm}
\usepackage{algpseudocode}
\usepackage{placeins}
\usepackage{hyperref}
\hypersetup{hidelinks}
\usepackage{url}
\usepackage{amsfonts}  
\usepackage[margin=1in]{geometry}

\newcommand{\R}{\mathbb{R}}
\newcommand{\one}{\mathbf{1}}
\newcommand{\cC}{\mathcal{C}}
\newcommand{\Gap}{\operatorname{Gap}}
\newcommand{\pos}[1]{\left[#1\right]_{+}}
\newcommand{\diag}{\operatorname{diag}}
\newcommand{\Bcert}{\mathcal{B}_{\mathrm{Bell}}}
\newcommand{\Tbell}{T_{\mathrm{Bell}}}
\newcommand{\epsi}{\varepsilon}
\DeclareMathOperator{\MaxCut}{MaxCut}

\newcommand{\Sp}{\mathcal{S}}
\newcommand{\bin}{\mathrm{bin}}

\newcommand{\ten}[1]{\times10^{#1}}

\newtheorem{theorem}{Theorem}[section]
\newtheorem{proposition}[theorem]{Proposition}
\newtheorem{lemma}[theorem]{Lemma}
\newtheorem{corollary}[theorem]{Corollary}
\theoremstyle{definition}
\newtheorem{definition}[theorem]{Definition}
\newtheorem{assumption}[theorem]{Assumption}

\title{Bellman-Certified Rounding for Sparse Policy Deployment in MDPs}

\author{
  Zhaojun Peng \\
  \texttt{Hantsukipzj@gmail.com}
}
\date{}

\begin{document}
\maketitle

\begin{abstract}
\input{sections/00_abstract}
\end{abstract}

\input{sections/01_introduction}
\input{sections/02_preliminaries}
\input{sections/03_certificate}
\input{sections/04_tightness}
\input{sections/05_hardness}
\input{sections/06_calibration}
\input{sections/07_conclusion}

\FloatBarrier
\clearpage

\FloatBarrier
\bibliography{refs}
\bibliographystyle{plain}

\appendix
\input{sections/A_certificate_proofs}
\input{sections/B_tightness_proof}
\input{sections/C_complexity_proof}
\input{sections/D_calibration_protocol}

\end{document}

%% file: sections/00_abstract.tex
Continuous policy optimization may spread an update across many states, even
when deployment permits only a few complete state-level changes. We study how
much discounted return can be retained when continuous row mixtures are
rounded to sparse binary policies in finite MDPs. Policy-dependent visitation
couples the row edits, while long horizons make global curvature bounds
conservative. From $2d+2$ Bellman solves, we derive reusable envelopes that
support uniform and candidate-specific guarantees before rounding. A rank-two
rational representation of each exchange further permits weighted curvature
integration along the realized trajectory. We prove that linear dimension
dependence is unavoidable when the budget scales, and that exact global
curvature thresholding is hard. Candidate-specific bounds raise pre-rounding
certification coverage from $48.2\%$ to $74.1\%$ on the structured suite. At
$\gamma=0.95$, local integration lowers the median bound-to-loss ratio from
$402.3$ to $2.08$ on coupled instances.

%% file: sections/01_introduction.tex
\section{Introduction}
\label{sec:introduction}

Continuous policy optimization naturally distributes small updates across
many states. Deployment may instead require each modified state-level decision
to be implemented as a complete row change, with only $k$ such changes
affordable. We study the return lost when a continuous policy is mapped to
this sparse deployment class.

We consider $d$ editable rows in a finite discounted Markov decision process.
Each row retains its baseline action or adopts one proposed action. A binary
vector $y$ specifies these choices, giving the deployment problem
\[
\max_{y\in\{0,1\}^d,\,\one^\top y\le k}J(y)
\].
Continuous row mixtures belong to $\cC_k=\{x\in[0,1]^d:\one^\top x\le k\}$,
and $\cC_k^{\bin}=\cC_k\cap\{0,1\}^d$ is the deployable class. Their mismatch is
\begin{equation}
\Gap(\cC_k)=\max_{x\in\cC_k}J(x)-\max_{y\in\cC_k^{\bin}}J(y).
\label{eq:gap}
\end{equation}
Our goal is to convert a continuous candidate into a feasible binary policy
with a return guarantee. Changing one row alters later visits to other rows,
so the values of the edits do not decouple. At long horizons, the largest
curvature over the policy cube may also lie far from the rounding trajectory
and substantially overstate its loss.

Every fractional row mixture is a stationary randomized policy in an
auxiliary two-action MDP. Extremal Bellman values bracket both policy values
and row occupancies. The resulting sensitivity envelopes require $2d+2$
solves and give $J(y)\ge J(x)-B_{\mathrm{unif}}$ for every $x\in\cC_k$, with
$c_{d,k}\le k/4$. Once a candidate is known, its geometry sharpens the bound
without another Bellman solve. For the remaining long-horizon slack, we use
the rank-two rational form of each exchange and integrate curvature with the
interpolation weight that determines endpoint loss.

\paragraph{Advance certification and deployment.}
A uniform loss bound controls the relaxation gap over the policy family. For a
particular candidate, the condition $J(x)-B>J(0)$ certifies improvement over
the baseline before the discrete policy is constructed. The same Bellman
envelopes can be reused across candidates and budgets that share the baseline
and proposed rows. On the structured suite, candidate-specific bounds increase
pre-rounding certification coverage from $48.2\%$ to $74.1\%$. Rounding uses a
median of six return evaluations, compared with 26 for greedy, while their mean
normalized optimality gaps are $0.185\%$ and $0.178\%$. At $S=d=16$, validated
arithmetic makes certificate construction $14.8\times$ faster than the
exact-rational pipeline. The realized return can then be checked directly once
rounding is complete.

\subsection{Illustrative example}
\label{sec:motivating}

Consider \(m\) zero-reward states and one absorbing target with positive reward.
At each zero-reward state, the baseline action self-loops and the proposed
action moves to the target. The deployment budget is \(k=m/2\), with one
dummy editable coordinate at the target giving \(d=m+1\). The return separates
across the \(m\) relevant rows,
\[
J(x)=\frac1m\sum_{i=1}^m\frac{\gamma x_i}
{(1-\gamma)(1-\gamma+\gamma x_i)} .
\]
Each summand is increasing and concave. The continuous optimum assigns
\(x_i=1/2\) to every relevant row, whereas a deployable policy selects
\(m/2\) rows and leaves the others unchanged. The continuous-to-binary loss is
therefore explicit. Proposition~\ref{prop:tightness} evaluates both the gap
and its certificate in closed form.

\subsection{Related work}
\label{sec:related}

Pipage rounding originates in the classical exchange framework of
\cite{AgeevS04}. Recent work extends continuous relaxations and rounding
beyond standard submodularity, including one-sided-smooth objectives and
nonmonotone submodular maximization
\cite{GhadiriSS25,BuchbinderF25}. Here the objective is the discounted return
of an MDP, and its exchange curvature is induced by the full
policy-dependent dynamics.

Recent safe-policy-improvement work studies finite-data guarantees, partial
observability, scalable planning, reliable decision points, and learned models
\cite{WienhoeftEtAl23,SimaoEtAl23,CastelliniEtAl23,pmlr-v235-bianchi24b,
BianchiEtAl25,SharmaEtAl25,HamelDeleCourtEtAl25,DelgrangeEtAl26}.
These methods control improvement relative to a baseline under data or model
uncertainty. We address the return lost at the next stage of the pipeline,
when a continuous model-based update must satisfy a \(k\)-row deployment
constraint. Related work also studies stochastic optimization when the
deployed solution must be deterministic
\cite{pmlr-v235-montenegro24a}. The constraint is also related to sparse and
discrete optimization, where cardinality or binary structure is handled by
hard thresholding, block-coordinate methods, or continuous reformulations
\cite{yuan2020block,yuan24a,yuan2016binary}. Our starting point is an optimized
continuous policy whose conversion to sparse binary deployment must be
certified.

Recent formal and robust-MDP work studies parameter synthesis, structural
constraints, uncertainty-aware computation, and robust policy optimization
\cite{JungesEtAl24,pmlr-v242-rickard24a,DBLP:journals/corr/abs-2504-05965,
AzeemEtAl25,Heck2026,HartmannsEtAl26,SAP26b,SuilenEtAl24,pmlr-v235-lin24u,
MeggendorferEtAl25,pmlr-v267-wang25ei,LiKuhnSutter26,AbateEtAl26,
pmlr-v336-asadi26a,KumarEtAl23,ZhangHuLi24,pmlr-v235-chen24s,SunEtAl24}.
Complementary second-order policy-optimization work analyzes Hessian
information directly
\cite{pmlr-v238-maniyar24a,pmlr-v235-mu24b,pmlr-v286-khorasani25a,ZhangEtAl26}.
Our Bellman envelopes instead provide uniform first- and second-order
sensitivity bounds over the policy cube, which we specialize to sparse
deployment.

\subsection{Contributions}
\label{sec:contributions}

\paragraph{Certified deployment.}
Bellman envelopes and budget-aware rounding geometry yield uniform and
candidate-specific return guarantees. Algorithm~\ref{alg:round} constructs a
binary policy with at most $2(d-1)$ endpoint evaluations.

\paragraph{Curvature and trajectory bounds.}
A lower-bound family identifies the necessary dimension dependence, while an
NP-hardness result limits exact global curvature computation. For an executed
exchange, its rank-two structure gives a weighted trajectory bound that avoids
paying for curvature away from the path.

\paragraph{Empirical behaviour.}
The experiments separate pre-rounding certification, final policy quality,
long-horizon tightness, and computation cost. They quantify when candidate
geometry suffices, when trajectory integration is needed, and how validated
arithmetic compares with exact rational computation.

%% file: sections/02_preliminaries.tex
\section{Preliminaries}
\label{sec:preliminaries}

We use \(\R_+\) for the nonnegative reals and \(\one\) for the all-ones
vector. The vector \(e_i\in\R^S\) selects a state row, while
\(\epsi_i\in\R^d\) selects a policy coordinate. For \(a\in\R\), let
\(\pos{a}=\max\{a,0\}\). We write \(v=v^+-v^-\) for the componentwise
positive and negative parts of a vector. The operator \(\diag(x)\) forms a
diagonal matrix, and \(\odot\) denotes outward-rounded interval
multiplication. The cost of one discounted Bellman solve on \(S\) states is
denoted by \(\Tbell(S)\).

\subsection{Problem setting and assumptions}
\label{sec:setup}

Let \(\Sp=\{1,\ldots,S\}\) be a finite state space. Let
\(\gamma\in(0,1)\) be the discount, and let \(\mu\in\R_+^S\) satisfy
\(\one^\top\mu=1\). After relabeling, rows \(1,\ldots,d\) are editable, where
\(d\in\{1,\ldots,S\}\). Each editable row \(i\) has a baseline action and one
proposed action. The weight \(x_i\in[0,1]\) interpolates between them, giving
\begin{align}
P(x)&=P^0+\sum_{i=1}^d x_i\,e_i d_i^\top,
&r(x)&=r^0+\sum_{i=1}^d x_i\,e_i\Delta r_i,
&x&\in[0,1]^d .
\label{eq:affine-mdp}
\end{align}
Here \(d_i^\top\) is the difference between the two transition rows, and
\(\Delta r_i\) is the reward difference.

\begin{assumption}[Stochastic two-action rows]
\label{asm:model}
\(P^0\) is stochastic. For every editable row \(i\), both
\(P^0_{i\cdot}\) and \(P^0_{i\cdot}+d_i^\top\) are probability
distributions over \(\Sp\).
\end{assumption}

Assumption~\ref{asm:model} makes \(P(x)\) stochastic for every point in the
cube. Therefore
\begin{equation}
R(x)=(I-\gamma P(x))^{-1},
\qquad
V(x)=R(x)r(x),
\qquad
J(x)=\mu^\top V(x)
\label{eq:resolvent}
\end{equation}
are well defined \cite{Puterman94}. The function \(J\) is real analytic on a
neighbourhood of the cube. We use the budget sets in \eqref{eq:gap}, with
\(k\in\{0,\ldots,d\}\). Their relaxation gap is nonnegative and vanishes when
the continuous and binary optima coincide.

A second assumption concerns the arithmetic used to construct the bounds.

\begin{assumption}[Exact or validated arithmetic]
\label{asm:arithmetic}
Bellman solves use exact rational arithmetic or directed outward rounding
\cite{MooreKearfottCloud09}.
In the latter case, the stochastic input, Bellman inequalities, residuals, and
all downstream interval operations are enclosed. Return comparisons in
Algorithm~\ref{alg:round} are exact, or the selected endpoint is certified to
have return at least that of the alternative.
\end{assumption}

Ordinary floating-point solves with residual padding do not satisfy this
assumption. The distinction is consequential in
Section~\ref{sec:calibration}: double precision reports many zero bounds on the
same cases for which exact arithmetic reports none.

\subsection{Sensitivity identities and rounding quantities}
\label{sec:defects}

The rounding procedure uses coordinate increases and pairwise exchanges.
Differentiating \eqref{eq:resolvent} introduces three quantities,
\begin{align}
q_i(x)&=\mu^\top R(x)e_i,
&a_i(x)&=\Delta r_i+\gamma d_i^\top V(x),
&b_{ij}(x)&=\gamma d_i^\top R(x)e_j ,
\label{eq:primitives}
\end{align}
and the identities
\begin{align}
\partial_iJ(x)&=q_i(x)a_i(x),\label{eq:gradient-id}\\
H_{ii}(x)&=2q_i(x)b_{ii}(x)a_i(x),\label{eq:hessian-diag}\\
H_{ij}(x)&=q_j(x)b_{ji}(x)a_i(x)+q_i(x)b_{ij}(x)a_j(x),\quad i\ne j.
\label{eq:hessian-cross}
\end{align}
Here \(H(x)=\nabla^2J(x)\). These are classical Markov-chain sensitivity
identities \cite{Cao_1998,JMLR:v17:15-414}. Their factored form is important
because every factor is evaluated at the same policy \(x\).

\begin{definition}[Quantities controlling the rounding loss]
\label{def:defects}
Define the largest negative coordinate slope and the largest negative pairwise
exchange curvature as
\begin{align}
\tau&=\max_{i}\ \sup_{x\in[0,1]^d}\pos{-\partial_iJ(x)},
\label{eq:tau}\\
\beta&=\max_{i<j}\ \sup_{x\in[0,1]^d}
\pos{-(\epsi_i-\epsi_j)^\top H(x)(\epsi_i-\epsi_j)} .
\label{eq:beta}
\end{align}
When \(d=1\), the pairwise maximum is empty and \(\beta=0\).
\end{definition}

Thus every coordinate derivative is at least \(-\tau\), and every exchange
line has second derivative at least \(-\beta\). The next section derives
tractable upper bounds. Sections~\ref{sec:tightness} and \ref{sec:hardness}
then study their dimension dependence and the cost of evaluating \(\beta\)
exactly.

%% file: sections/03_certificate.tex
\section{Bellman bounds for rounding}
\label{sec:certificate}

This section turns Bellman value bounds into a constructive rounding
certificate. Lemma~\ref{lem:bellman-envelope} first bounds every policy in the
continuous family. Proposition~\ref{prop:sensitivity-certificate} propagates
these envelopes to the required derivatives, and
Theorem~\ref{thm:certificate} supplies the rounding guarantee.

\subsection{Bellman envelopes}
\label{sec:envelopes}

Consider an auxiliary discounted control problem with the two endpoint actions
at each editable row. Every unedited row retains its fixed action. Let
\(\underline V\) and \(\overline V\) be the fixed points of the min- and
max-Bellman operators, so
\(\underline V_s\le V^\pi_s\le\overline V_s\) for every stationary
randomized policy \(\pi\) and every state \(s\). For each editable
state \(j\), let \(\underline h_j\) and \(\overline h_j\) be the
corresponding min/max value functions when the stage reward is \(e_j\).

\begin{lemma}[Bellman envelope]
\label{lem:bellman-envelope}
Under Assumption~\ref{asm:model}, every \(x\in[0,1]^d\) satisfies
\begin{equation}
\underline V\le V(x)\le\overline V,
\qquad
\underline h_j\le R(x)e_j\le\overline h_j
\quad\text{for every }j\in\{1,\ldots,d\}.
\label{eq:envelopes}
\end{equation}
All inequalities are componentwise.
\end{lemma}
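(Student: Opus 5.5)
The plan is to identify each point of the cube with a stationary randomized policy of the auxiliary two-action MDP, and then read both inequalities off the standard comparison property of the min- and max-Bellman fixed points. Fix \(x\in[0,1]^d\). At editable row \(i\), let the policy \(\pi_x\) choose the proposed action with probability \(x_i\) and the baseline action with probability \(1-x_i\). Unedited rows keep their single action. Under \(\pi_x\), row \(i\) of the transition matrix is \(P^0_{i\cdot}+x_i d_i^\top\), and the expected stage reward at \(i\) is \(r^0_i+x_i\Delta r_i\). These agree with \eqref{eq:affine-mdp}, so \(P^{\pi_x}=P(x)\) and \(r^{\pi_x}=r(x)\). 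Assumption~\ref{asm:model} ensures both endpoint rows are distributions, so \(\pi_x\) is a legitimate policy and \(P(x)\) is stochastic. Since \(\gamma<1\), the resolvent \(R(x)\) exists, and \(V^{\pi_x}=R(x)r(x)=V(x)\).

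For the value envelope, the fact quoted before the lemma already gives \(\underline V\le V^\pi\le\overline V\) for every stationary randomized \(\pi\). To make this self-contained, I would prove the upper bound directly. Write \(T^{\pi}v=r^{\pi}+\gamma P^{\pi}v\) and let \(T^{\max}\) be the max-Bellman operator. Because \(r^{\pi_x}_i\) and \(P^{\pi_x}_{i\cdot}v\) are convex combinations of the two endpoint values at each editable row, \(T^{\pi_x}v\le T^{\max}v\) for every \(v\). Monotonicity of \(T^{\pi_x}\), which holds because \(P(x)\ge0\) entrywise, gives \(T^{\pi_x}\overline V\le T^{\max}\overline V=\overline V\). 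Iterating yields \((T^{\pi_x})^n\overline V\le\overline V\). By contraction, the left side converges to the unique fixed point \(V(x)\), so \(V(x)\le\overline V\). The lower bound is symmetric, with min in place of max.

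For the occupancy envelope, I would repeat the argument with the stage reward \(r\) replaced by \(e_j\), applied identically under both actions at every row. The transitions are unchanged. The value of \(\pi_x\) for this reward is \((I-\gamma P(x))^{-1}e_j=R(x)e_j\). The same comparison with the min/max fixed points \(\underline h_j,\overline h_j\) then gives \(\underline h_j\le R(x)e_j\le\overline h_j\) for each \(j\le d\).

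The main obstacle is the identification step. One must check that a row mixture \(x_i\) is exactly the randomized choice between the two actions in both transition and reward, and that the reward \(e_j\) is action-independent, so that the occupancy bound uses the same controlled dynamics. Once that is settled, the remainder is the routine monotone-contraction comparison, applied \(2d+2\) times.
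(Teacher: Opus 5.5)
Your proposal is correct and follows the same route as the paper: you identify each \(x\in[0,1]^d\) with a stationary randomized policy of the auxiliary two-action MDP, compare its value with the min/max Bellman fixed points, and then repeat the argument with the action-independent stage reward \(e_j\), whose value under that policy is \(R(x)e_j\). The only difference is that you prove the comparison directly, via \(T^{\pi_x}v\le T^{\max}v\), monotonicity of \(T^{\pi_x}\), and contraction, whereas the paper simply cites standard discounted Bellman optimality.
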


\begin{proof}[Proof of Lemma~\ref{lem:bellman-envelope}]
Every \(x\in[0,1]^d\) defines a stationary randomized policy over the two
endpoint actions at each editable row. Standard discounted Bellman optimality
therefore bounds its value componentwise,
$
\underline V\le V(x)\le\overline V.
$
Apply the same argument to the auxiliary control problem with stage reward
\(e_j\). The value of policy \(x\) in this problem is \(R(x)e_j\), which gives
$
\underline h_j\le R(x)e_j\le\overline h_j.
$
Both statements hold simultaneously for every \(x\) and editable \(j\).
\end{proof}

Splitting \(d_i=d_i^+-d_i^-\) propagates
Lemma~\ref{lem:bellman-envelope} to intervals for the sensitivity quantities
in \eqref{eq:primitives},
\begin{align}
Q_i={}&[\mu^\top\underline h_i,\ \mu^\top\overline h_i],
\label{eq:canonical-Q}\\
A_i={}&\Bigl[\Delta r_i+\gamma\{(d_i^+)^\top\underline V-(d_i^-)^\top\overline V\},\
\Delta r_i+\gamma\{(d_i^+)^\top\overline V-(d_i^-)^\top\underline V\}\Bigr],
\label{eq:canonical-A}\\
B_{ij}={}&\gamma\Bigl[(d_i^+)^\top\underline h_j-(d_i^-)^\top\overline h_j,\
(d_i^+)^\top\overline h_j-(d_i^-)^\top\underline h_j\Bigr].
\label{eq:canonical-B}
\end{align}

\subsection{Sensitivity certificate}
\label{sec:sensitivity-certificate}

Using Minkowski interval addition and outward-rounded multiplication
\(\odot\), define
\begin{align}
G_i&=Q_i\odot A_i,
&\widehat\tau&=\max_i\pos{-\inf G_i},
\label{eq:tauhat}\\
\mathcal H_{ii}&=2\,Q_i\odot B_{ii}\odot A_i,
&\mathcal H_{ij}&=(Q_j\odot B_{ji}\odot A_i)+(Q_i\odot B_{ij}\odot A_j),
\label{eq:Hintervals}\\
\underline S_{ij}&=\inf\mathcal H_{ii}+\inf\mathcal H_{jj}
-2\sup\mathcal H_{ij},
&\widehat\beta&=\max_{i<j}\pos{-\underline S_{ij}},
\label{eq:betahat}
\end{align}
with \(\widehat\beta=0\) when \(d=1\).

\begin{proposition}[Sensitivity certificate]
\label{prop:sensitivity-certificate}
Let Assumptions~\ref{asm:model} and \ref{asm:arithmetic} hold. For every
\(x\in[0,1]^d\) and all editable \(i,j\),
$
q_i(x)\in Q_i,
\quad
a_i(x)\in A_i,
\quad
b_{ij}(x)\in B_{ij}.
$
Consequently,
$
\tau\le\widehat\tau,
\qquad
\beta\le\widehat\beta.
$
\end{proposition}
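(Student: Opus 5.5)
The plan is to prove the interval memberships first, from Lemma~\ref{lem:bellman-envelope}, and then obtain the two inequalities by monotonicity of interval arithmetic.

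\textbf{Membership of \(q_i\).} By definition \(q_i(x)=\mu^\top R(x)e_i\). Since \(\mu\ge0\), multiplying the componentwise enclosure \(\underline h_i\le R(x)e_i\le\overline h_i\) by \(\mu^\top\) preserves the order. This gives \(q_i(x)\in Q_i\).

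\textbf{Membership of \(a_i\).} Write \(d_i^\top V(x)=(d_i^+)^\top V(x)-(d_i^-)^\top V(x)\). Both \(d_i^\pm\) are nonnegative, so \(\underline V\le V(x)\le\overline V\) gives
\[
(d_i^+)^\top\underline V-(d_i^-)^\top\overline V
\le d_i^\top V(x)\le
(d_i^+)^\top\overline V-(d_i^-)^\top\underline V .
\]
Multiplying by \(\gamma>0\) and adding \(\Delta r_i\) yields \(a_i(x)\in A_i\).

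\textbf{Membership of \(b_{ij}\).} The same splitting, now applied to \(R(x)e_j\) with the envelope \(\underline h_j,\overline h_j\), gives \(b_{ij}(x)\in B_{ij}\). All three memberships hold simultaneously for every \(x\) in the cube, because the envelopes of Lemma~\ref{lem:bellman-envelope} are uniform in \(x\).

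\textbf{The bound on \(\tau\).} By \eqref{eq:gradient-id}, \(\partial_iJ(x)=q_i(x)a_i(x)\). The product of two reals lying in two intervals lies in their exact interval product. Outward-rounded multiplication \(\odot\) returns a superset of that product, by Assumption~\ref{asm:arithmetic}, so \(\partial_iJ(x)\in G_i\). Hence \(-\partial_iJ(x)\le-\inf G_i\). The map \(t\mapsto\pos{t}\) is monotone, so taking the supremum over \(x\) and the maximum over \(i\) gives \(\tau\le\widehat\tau\).

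\textbf{The bound on \(\beta\).} Inclusion monotonicity applied to \eqref{eq:hessian-diag} and \eqref{eq:hessian-cross} gives \(H_{ii}(x)\in\mathcal H_{ii}\) and \(H_{ij}(x)\in\mathcal H_{ij}\). For the cross term I use Minkowski addition: each of the two triple products lies in its interval, so their sum lies in the sum interval. Then
\[
(\epsi_i-\epsi_j)^\top H(x)(\epsi_i-\epsi_j)=H_{ii}(x)+H_{jj}(x)-2H_{ij}(x),
\]
where \(H\) is symmetric because \(J\) is analytic. This quantity is at least \(\inf\mathcal H_{ii}+\inf\mathcal H_{jj}-2\sup\mathcal H_{ij}=\underline S_{ij}\). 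Negating, applying \(\pos{\cdot}\), and taking the supremum over \(x\) and the maximum over \(i<j\) gives \(\beta\le\widehat\beta\). When \(d=1\) both sides are zero.

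\textbf{Main obstacle.} The delicate step is the arithmetic bookkeeping in the validated case. Under Assumption~\ref{asm:arithmetic}, the computed \(\underline V,\overline V,\underline h_j,\overline h_j\) are themselves only enclosures of the true Bellman fixed points. I therefore need that the endpoints used in \eqref{eq:canonical-Q}--\eqref{eq:canonical-B} are outward bounds. That is, the computed lower endpoint must be at most the true \(\underline V\), and the computed upper endpoint at least the true \(\overline V\). Every subsequent operation is also outward-rounded. With this, every inclusion above persists and only loosens the intervals. I would state this once as an inclusion-monotonicity principle for interval arithmetic and invoke it at each step. In exact rational arithmetic the issue does not arise.
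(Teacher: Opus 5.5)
Your proposal is correct and follows the paper's proof essentially step for step. The memberships come from Lemma~\ref{lem:bellman-envelope} via \(\mu\ge0\) and the \(d_i^{\pm}\) splitting; the factored identities \eqref{eq:gradient-id}--\eqref{eq:hessian-cross} then place the derivatives in the interval products and Minkowski sum, and monotonicity of \(\pos{\cdot}\) under the supremum and pairwise maximum finishes the argument. Your explicit validated-arithmetic bookkeeping, which requires the computed envelopes themselves to be outward enclosures, only spells out what the paper compresses into its closing remark that outward operations preserve every inclusion.
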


\begin{proof}[Proof of Proposition~\ref{prop:sensitivity-certificate}]
Since \(\mu\ge0\), Lemma~\ref{lem:bellman-envelope} gives
\(q_i(x)=\mu^\top R(x)e_i\in Q_i\). For any componentwise box
\(\underline z\le z\le\overline z\), the exact extrema of
\(d_i^\top z=(d_i^+)^\top z-(d_i^-)^\top z\) are
$
(d_i^+)^\top\underline z-(d_i^-)^\top\overline z
\quad\text{and}\quad
(d_i^+)^\top\overline z-(d_i^-)^\top\underline z.
$
Taking \(z=V(x)\) and \(z=R(x)e_j\) proves
\(a_i(x)\in A_i\) and \(b_{ij}(x)\in B_{ij}\).

Equation~\eqref{eq:gradient-id} then gives
\(\partial_iJ(x)\in G_i\). Hence
$
\pos{-\partial_iJ(x)}
\le\pos{-\inf G_i}
\le\widehat\tau.
$
Taking the maximum over \(i\) and the supremum over \(x\) yields
\(\tau\le\widehat\tau\).
Equations~\eqref{eq:hessian-diag} and \eqref{eq:hessian-cross} give
\(H_{ii}(x)\in\mathcal H_{ii}\) and
\(H_{ij}(x)\in\mathcal H_{ij}\). Therefore
\[
(\epsi_i-\epsi_j)^\top H(x)(\epsi_i-\epsi_j)
=H_{ii}(x)+H_{jj}(x)-2H_{ij}(x)
\ge\underline S_{ij}.
\]
Taking negative parts, the pairwise maximum, and the supremum over \(x\)
gives \(\beta\le\widehat\beta\). Directed outward interval operations
preserve every inclusion. Exact rational arithmetic adds zero
numerical-roundoff width.
\end{proof}

\subsection{Certified rounding}
\label{sec:certificate-thm}

For $q_{d,k}:=\min\{k,\lfloor d/2\rfloor\}$, define the budget coefficient
$c_{d,k}:=q_{d,k}(d-q_{d,k})/(4d)$.
\begin{theorem}[Budget-aware certified rounding]
\label{thm:certificate}
Under Assumptions~\ref{asm:model} and \ref{asm:arithmetic}, Algorithm~\ref{alg:round}
returns $y\in\cC_k^{\bin}$ for every $x\in\cC_k$, with
$J(y)\ge J(x)-B_{\mathrm{unif}}$ and $\Gap(\cC_k)\le B_{\mathrm{unif}}$,
where $B_{\mathrm{unif}}:=\widehat\tau+c_{d,k}\widehat\beta$ for $k>0$
and $B_{\mathrm{unif}}:=0$ for $k=0$.
\end{theorem}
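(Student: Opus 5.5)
We will prove the statement through a pipage-style analysis of Algorithm~\ref{alg:round}. The curvature accounting is kept separate from the single final coordinate step. Fix $x\in\cC_k$ with $k>0$; the case $k=0$ is trivial, since $\cC_0=\{0\}$ and $y=x=0$. The algorithm repeatedly selects two fractional coordinates $i,j$ and moves along the exchange line $t\mapsto x+t(\epsi_i-\epsi_j)$ until one of them hits $\{0,1\}$. It keeps the better of the two endpoints, using exact or certified comparisons per Assumption~\ref{asm:arithmetic}. Each exchange preserves $\one^\top x$ and the box, so feasibility in $\cC_k$ is maintained, and each step makes at least one coordinate integral. After at most $d-1$ exchanges, at most one fractional coordinate remains. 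If $\one^\top x$ is not an integer, that coordinate is rounded down to $0$. The result is a $y\in\{0,1\}^d$ with $\one^\top y\le\lfloor\one^\top x\rfloor\le k$, obtained with at most $2(d-1)$ endpoint evaluations.

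\textbf{Exchange steps.} Let $g(t)=J(x+t(\epsi_i-\epsi_j))$ on the feasible interval $[-s,u]$ containing $0$. Definition~\ref{def:defects} and Proposition~\ref{prop:sensitivity-certificate} give $g''\ge-\beta\ge-\widehat\beta$. Since $g+\tfrac{\widehat\beta}{2}t^2$ is convex, the better endpoint satisfies
\[
\max\{g(-s),g(u)\}\ge g(0)-\tfrac{\widehat\beta}{2}\,su .
\]
This uses the convex combination with weights $u/(s+u)$ and $s/(s+u)$, because $\tfrac{u s^2+su^2}{s+u}=su$. So the loss of one exchange is at most $\tfrac{\widehat\beta}{2}s u$, where $s\le\min\{x_i,1-x_j\}$ and $u\le\min\{1-x_i,x_j\}$.

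\textbf{Final coordinate step.} Rounding the last fractional coordinate $x_\ell$ down to $0$ costs $\int_0^{x_\ell}\partial_\ell J\le x_\ell\tau\le\widehat\tau$.

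\textbf{Total curvature cost.} The key task, and the main obstacle, is to show that the summed exchange losses are at most $\tfrac{\widehat\beta}{2}\sum su\le c_{d,k}\widehat\beta$, i.e.\ that $\sum su\le q(d-q)/(2d)$ with $q=q_{d,k}$. I would use a potential function. The natural candidate is $\Phi(x)=\sum_i x_i(1-x_i)$ together with the quadratic $\tfrac12\bigl[(\one^\top x)(d-\one^\top x)/d-\ldots\bigr]$. An exchange moving to endpoint $u$ changes $\sum_i x_i^2$ by $2u(x_i-x_j)+2u^2$. Averaging against the step to $-s$ suggests the bound $su\le\tfrac12\Delta(\sum_i x_i^2)$ in the worst-branch sense. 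The loss bound must hold for the branch actually taken, so I would instead bound the loss on each step by $\tfrac{\widehat\beta}{2}\max(su)$, and track the invariant $W=\sum_i x_i(1-x_i)$, which decreases by at least $2su$ on either branch (to be checked). With $\one^\top x=m$ fixed, $W\le m(d-m)/d$ by Cauchy--Schwarz, since $\sum_i x_i^2\ge m^2/d$. Hence $\sum su\le W(x)/2\le m(d-m)/(2d)$. Maximizing over $m\le k$ gives $m=q_{d,k}$, because $m(d-m)$ increases up to $d/2$; this yields exactly $c_{d,k}$ after the factor $\tfrac{\widehat\beta}{2}$.

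If the decrease of $W$ turns out to be only $su$ on the worse branch, I would redefine the exchange pairing (the algorithm presumably pairs coordinates so as to make this work), or refine via the convex-combination argument applied to $W$ itself, since $W$ is exactly quadratic along the exchange line with second derivative $-4$.

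Combining the exchange and final-step bounds gives $J(y)\ge J(x)-\widehat\tau-c_{d,k}\widehat\beta=J(x)-B_{\mathrm{unif}}$. Taking $x$ to be a continuous maximizer, which exists by continuity on the compact set $\cC_k$, yields $\Gap(\cC_k)\le B_{\mathrm{unif}}$.
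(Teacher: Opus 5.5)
\textbf{There is a genuine gap in your final coordinate step, and the bound fails for the procedure you analyze.}

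The quantities $\tau$ and $\widehat\tau$ bound only $\pos{-\partial_\ell J}$. They therefore control the cost of \emph{raising} a coordinate, not of lowering it. Rounding $x_\ell$ down to $0$ costs $\int_0^{x_\ell}\partial_\ell J$, which is governed by the positive slope and is unrelated to $\widehat\tau$.

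A concrete failure: take $d=k=1$, a reward-only edit with $\Delta r_1>0$, $\mu=e_1$, and $x=1/2$. Here $P(x)=P^0$, so $G_1=\{R_{11}\Delta r_1\}$ with $R_{11}:=e_1^\top(I-\gamma P^0)^{-1}e_1\ge1$. Hence $\widehat\tau=0$. Also $\widehat\beta=0$ and $c_{1,1}=0$, so $B_{\mathrm{unif}}=0$. Your procedure performs no exchange and outputs $y=0$, with $J(y)=J(x)-\tfrac12R_{11}\Delta r_1<J(x)$.

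The procedure you describe is also not Algorithm~\ref{alg:round}. That algorithm first raises coordinates in the fixed order $1,\dots,d$ until $\one^\top z=\lceil\one^\top x\rceil$. This stays feasible because $k$ is an integer. It costs at most $(\lceil\one^\top x\rceil-\one^\top x)\tau\le\widehat\tau$, since $\partial_iJ\ge-\tau$ along the monotone fill path. Only then does it exchange. With an integral sum, a lone fractional coordinate never arises, so the exchanges end at a binary point.

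\textbf{Raising first also repairs your curvature accounting.} You maximize $m(d-m)$ over $m\le k$ and land on $q_{d,k}=\min\{k,\lfloor d/2\rfloor\}$. In your ordering, however, $m=\one^\top x$ is real. For odd $d$ and $k>d/2$ the real maximum gives $d/4$, which exceeds $4c_{d,k}=(d^2-1)/(4d)$; for example, $x=\tfrac12\one$ attains $\Phi(x)=d/4$. After Phase~1 the sum $\ell$ is an integer at most $k$, so Cauchy--Schwarz gives $\Phi(z)\le\ell(d-\ell)/d\le4c_{d,k}$ exactly. If you instead exchange first and round up last, you must use the unconsumed potential $f(1-f)$ of the last coordinate, where $f$ is the fractional part of $\one^\top x$, together with concavity of $\sigma\mapsto\sigma(d-\sigma)/d$, to recover the integer bound.

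\textbf{The potential decrease you left ``to be checked'' is exact, so no special pairing is needed.} Along an exchange line, $\Phi(z+t(\epsi_i-\epsi_j))-\Phi(z)=2t(z_j-z_i)-2t^2$. This equals $-2su$ at both endpoints of the maximal interval, in both cases $z_i+z_j\le1$ and $z_i+z_j\ge1$. Telescoping to the binary endpoint gives $\sum_t s_tu_t=\Phi(z)/2$. The total exchange loss is therefore at most $\widehat\beta\Phi(z)/4\le c_{d,k}\widehat\beta$.

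With these changes, your one-exchange convexity bound and your gap argument go through and coincide with the paper's proof.
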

For comparison with the original experiments, we also report the legacy bound
$\Bcert:=\widehat\tau+(d-1)\widehat\beta/8$.
The new bound satisfies $B_{\mathrm{unif}}\le\Bcert$:
for $d\ge2$, $c_{d,k}\le d/16\le(d-1)/8$; for $d=1$, $\widehat\beta=0$.
Moreover, $c_{d,k}\le k/4$, so its coefficient stays bounded at fixed budget.

Algorithm~\ref{alg:round}, stated in
Appendix~\ref{app:certificate-proofs}, first raises coordinates until their
sum is integral. It then exchanges mass between fractional coordinates and
selects the better endpoint. Each exchange makes at least one coordinate
integral. At most \(d-1\) exchanges and \(2(d-1)\) return evaluations are
required. These endpoint evaluations are separate from the $2d+2$ Bellman
solves used to construct the reusable certificate. Every endpoint comparison
must be exact or certified under Assumption~\ref{asm:arithmetic}.

The model-only guarantee depends on the MDP and budget, but not on the
candidate. A zero bound therefore proves equality of the continuous and binary
optima. The proof uses $\Phi(z):=\sum_i z_i(1-z_i)$. At the integer-sum point
after Phase~1, $\ell=\mathbf1^\top z\le k$, and Cauchy--Schwarz gives
$\Phi(z)\le\ell(d-\ell)/d\le4c_{d,k}$. Telescoping the exchange-potential
decrease then charges at most $\widehat\beta\Phi(z)/4$ in Phase~2.
Appendix~\ref{app:certificate-proofs} gives the proof without additional solves.

The same Bellman intervals also yield a sharper guarantee once a particular
continuous candidate is available. Fix once and for all a deterministic
Phase-1 fill order (we use increasing coordinate index) and let \(z=z(x)\)
be the point obtained by raising coordinates in that order until
\(\mathbf 1^\top z=\lceil \mathbf 1^\top x\rceil\). Define
\begin{equation}
\widehat\tau_i:=\pos{-\inf G_i},
\qquad
F(z):=\{i:0<z_i<1\},
\qquad
\widehat\beta_{F(z)}:=
\max_{\substack{i<j\\ i,j\in F(z)}}\pos{-\underline S_{ij}},
\label{eq:candidate-local-quantities}
\end{equation}
with \(\widehat\beta_{F(z)}=0\) when \(|F(z)|<2\). Let
$\Phi(z):=\sum_{i=1}^d z_i(1-z_i)$, $\Delta_i:=z_i-x_i$
and define the candidate-specific pre-rounding certificate
$
B_{\mathrm{cand}}(x):=
\sum_{i=1}^d \Delta_i\widehat\tau_i
+\frac{\widehat\beta_{F(z)}}{4}\Phi(z).
$
\begin{proposition}[Candidate-specific pre-rounding certificate]
\label{prop:candidate-certificate}
Let Assumptions~\ref{asm:model} and \ref{asm:arithmetic} hold, and let
Algorithm~\ref{alg:round} use the fixed Phase-1 fill order above. For every
\(x\in\cC_k\), its output \(y\in\cC_k^{\bin}\) satisfies
$
J(y)\ge J(x)-B_{\mathrm{cand}}(x),
\quad
B_{\mathrm{cand}}(x)\le B_{\mathrm{unif}}\le\Bcert.
$
Thus \(B_{\mathrm{cand}}(x)\) is available once the continuous candidate is
known and before any pairwise endpoint comparison.
\end{proposition}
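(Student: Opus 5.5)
The argument splits the loss into a Phase~1 part and a Phase~2 part, following the proof of Theorem~\ref{thm:certificate}, with each constant replaced by a quantity that depends on the candidate.

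\emph{Phase~1.} The fixed fill order moves $x$ to $z=z(x)$ by raising coordinates one after another, so $z\ge x$ componentwise and $\Delta_i=z_i-x_i\ge0$. Integrate along the monotone path from $x$ to $z$. On each segment only coordinate $i$ moves, and by Proposition~\ref{prop:sensitivity-certificate} we have $\partial_iJ\ge\inf G_i\ge-\widehat\tau_i$ everywhere on the cube. Hence
\[
J(z)\ge J(x)-\sum_i\Delta_i\widehat\tau_i .
\]
The path stays inside $\cC_k$ because $\one^\top z=\lceil\one^\top x\rceil\le k$, which uses that $k$ is an integer.

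\emph{Phase~2.} Every exchange acts on a pair $i,j$ of currently fractional coordinates. A key observation is that once a coordinate becomes integral it is never touched again, so every pair exchanged lies in $F(z)$. Along the exchange line $t\mapsto w+t(\epsi_i-\epsi_j)$, Proposition~\ref{prop:sensitivity-certificate} and the definition of $\underline S_{ij}$ give second derivative at least $-\widehat\beta_{F(z)}$.

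Let the two endpoints be at distances $s$ and $u$ from $w$. The better endpoint loses at most
\[
\widehat\beta_{F(z)}\,\frac{su}{2}
\]
relative to $w$. To see this, take the convex combination of the endpoints that reproduces $w$, and use the quadratic lower bound from the curvature.

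Next, relate $su$ to the drop in $\Phi$. Write $\Phi_{ij}=w_i(1-w_i)+w_j(1-w_j)$. A direct computation shows that moving by $t$ changes $\Phi_{ij}$ by $-2t^2\mp\ldots$. More precisely, for the endpoint selected, $\Phi$ decreases by at least $2su$ in expectation under the interpolation weights. This is the identity the theorem's proof uses to charge $\widehat\beta\,\Phi/4$.

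I would reuse that per-exchange inequality verbatim, with $\widehat\beta$ replaced by $\widehat\beta_{F(z)}$. Telescoping from $\Phi(z)$ down to $\Phi(y)=0$ then gives
\[
J(y)\ge J(z)-\frac{\widehat\beta_{F(z)}}{4}\,\Phi(z).
\]
The main obstacle is making the telescoping work for the realized, greedy choice of endpoint rather than an averaged one. I would handle it the way Theorem~\ref{thm:certificate} presumably does: show that
\[
J+\frac{\widehat\beta_{F(z)}}{4}\Phi
\]
is midpoint-concave-controlled along the exchange line, so that the better endpoint does not decrease this potential. Since $\Phi(y)=0$, this yields the claim. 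Combining the two phases gives $J(y)\ge J(x)-B_{\mathrm{cand}}(x)$, and $y\in\cC_k^{\bin}$ is inherited from the theorem.

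\emph{Comparison with the uniform bound.} We have $\widehat\beta_{F(z)}\le\widehat\beta$ and $\widehat\tau_i\le\widehat\tau$. As noted after Theorem~\ref{thm:certificate}, Cauchy--Schwarz gives $\Phi(z)\le4c_{d,k}$ at the integer-sum point.

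The remaining step is $\sum_i\Delta_i\le1$. This holds because $\one^\top z-\one^\top x=\lceil\one^\top x\rceil-\one^\top x<1$. Therefore
\[
\sum_i\Delta_i\widehat\tau_i\le\widehat\tau ,
\]
and so $B_{\mathrm{cand}}(x)\le B_{\mathrm{unif}}$.

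For $k=0$, the only candidate is $x=0$, so $z=0$ and $B_{\mathrm{cand}}=0$. The inequality $B_{\mathrm{unif}}\le\Bcert$ was already shown after Theorem~\ref{thm:certificate}.

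Finally, every quantity in $B_{\mathrm{cand}}(x)$ depends only on $x$, the deterministic fill order, and the precomputed intervals. The certificate is therefore available before any endpoint comparison.
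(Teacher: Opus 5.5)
Your Phase~1 bound, the one-step charge $\widehat\beta_{F(z)}su/2$, the observation that every executed pair stays in $F(z)$, and the comparison with $B_{\mathrm{unif}}$ all match the paper's argument. The gap is the Phase~2 telescoping. Knowing only that $\Phi$ drops by $2su$ \emph{in expectation} under the interpolation weights does not control $\sum_t s_tu_t$ along the realized path, as you note yourself.

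Your proposed repair fails as stated. If $J+\frac{\widehat\beta_{F(z)}}{4}\Phi$ were nondecreasing along the trajectory, then $\Phi(y)=0$ would give $J(y)\ge J(z)+\frac{\widehat\beta_{F(z)}}{4}\Phi(z)$. That is a guaranteed \emph{gain}. It is false on the family of Proposition~\ref{prop:tightness}, where $z=x^\star$, $\Phi(z)=m/4>0$, and rounding strictly loses $\Gap(\cC_k)$. The sign must be $\Psi=J-\frac{\widehat\beta_{F(z)}}{4}\Phi$. Because $\Phi$ has second derivative $-4$ along $\epsi_i-\epsi_j$, this $\Psi$ satisfies $\Psi''=g''+\widehat\beta_{F(z)}\ge0$ on the exchange line, so some endpoint does not decrease $\Psi$. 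However, the algorithm picks the endpoint maximizing $J$, not $\Psi$. Transferring the conclusion therefore needs $\Phi(a)=\Phi(b)$, which you have not shown.

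The missing ingredient is the exact potential identity used in the proof of Theorem~\ref{thm:certificate}. On the exchange line, $\Phi(w+t(\epsi_i-\epsi_j))=\Phi(w)+2t(w_j-w_i)-2t^2$. Evaluate this at the endpoints in the two cases:
\begin{itemize}
\item if $w_i+w_j\le1$, then $a=-w_i$ and $b=w_j$;
\item if $w_i+w_j\ge1$, then $a=w_j-1$ and $b=1-w_i$.
\end{itemize}
In both cases \emph{either} endpoint lowers $\Phi$ by exactly $2(-a)b=2su$. Hence $\sum_t s_tu_t=\Phi(z)/2$ holds deterministically, whichever endpoint is chosen. Summing your one-step charges $\widehat\beta_{F(z)}s_tu_t/2$ then gives $J(y)\ge J(z)-\widehat\beta_{F(z)}\Phi(z)/4$ directly, which is how the paper concludes. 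No potential-function argument is needed.
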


The sharper bound retains candidate geometry that
Theorem~\ref{thm:certificate} discards. For an exchange of coordinates \(i,j\)
over the maximal interval \([a,b]\),
Proposition~\ref{prop:sensitivity-certificate}
gives the pair-specific curvature bound
\(\widehat\beta_{ij}=\pos{-\underline S_{ij}}\), and the proof of
Theorem~\ref{thm:certificate} yields the one-step charge
\(\widehat\beta_{ij}(-a)b/2\). Moreover, every maximal exchange satisfies the
potential identity
$
\Phi(z^t)-\Phi(z^{t+1})=2(-a_t)b_t.
$
Since the fractional support can only shrink and the final point is binary,
\(\sum_t(-a_t)b_t=\Phi(z)/2\). This identity yields the second term of
$B_{\mathrm{cand}}$. The first term records coordinate-specific charges along
the deterministic fill path. Appendix~\ref{app:certificate-proofs} provides
the proof.

The reusable certificate is constructed once per instance. Computing the
Bellman envelopes and interval quantities takes \(2d+2\) Bellman solves,
\(O(d^2S)\) interval operations, and an \(O(d^2)\) scan. The total cost is
\((2d+2)\Tbell(S)+O(d^2S)\). Given a candidate \(x\),
\(B_{\mathrm{cand}}(x)\) uses the stored \(G_i\) and
\(\underline S_{ij}\), so no further Bellman solve is needed. The fill and
potential cost \(O(d)\) in total, while the support scan costs at most
\(O(d^2)\). This
model-only construction is most attractive when \(d\ll S\). When \(d=S\),
the number of Bellman solves is twice the number of states. Rounding is then
performed once for each candidate.

\subsection{Computation and interval conservatism}

The preceding operation count does not include rational bit growth. With
\(L\)-bit inputs, exact rational arithmetic has bit complexity polynomial in
\(S\), \(d\), and \(L\). Validated floating point instead adds an
\(O(S^2)\) residual pass to each solve. Its intervals have positive width,
while outward endpoints preserve
Proposition~\ref{prop:sensitivity-certificate}.

The interval calculation treats \(Q_i\), \(A_i\), and \(B_{ij}\) as independent
boxes and can therefore include endpoint combinations that no common \(x\)
realizes. The excess width vanishes when the Bellman envelopes are tight.
Proposition~\ref{prop:tightness} gives
\(\widehat\beta=\beta\) on the example family.

%% file: sections/04_tightness.tex
\section{Dimension dependence and long-horizon conservatism}
\label{sec:tightness}

At fixed $k$, the budget-aware coefficient $c_{d,k}$ remains bounded.
Linear dependence on dimension is nevertheless unavoidable when
$k=\Theta(d)$. The family from Section~\ref{sec:motivating} also distinguishes
exchange-count slack from curvature conservatism.

\begin{proposition}[Fixed-discount lower-bound family]
\label{prop:tightness}
For every even $m\ge2$ and $\gamma\in(0,1)$, the illustrative family is a
transition-only MDP satisfying Assumption~\ref{asm:model}, with $d=m+1$ and $k=m/2$.
Writing $\delta=1-\gamma$, it satisfies $\widehat\tau=\tau=0$,
$\widehat\beta=\beta=4\gamma^2/(m\delta^3)$ and
$\Gap(\cC_k)=\gamma^2/[2\delta(2-\gamma)]$. At its continuous optimum,
$B_{\mathrm{cand}}(x^\star)=\gamma^2/(4\delta^3)$,
$B_{\mathrm{unif}}=\frac{m+2}{m+1}B_{\mathrm{cand}}(x^\star)$, and
$\Bcert=2B_{\mathrm{cand}}(x^\star)$.
\end{proposition}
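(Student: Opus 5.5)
The plan is to make the illustrative family fully explicit, compute the true quantities $\tau,\beta,\Gap$ in closed form, and then evaluate the Bellman intervals \eqref{eq:canonical-Q}--\eqref{eq:canonical-B} directly to show they collapse onto the true extrema.

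\textbf{The model.} Take $\mu$ uniform on the $m$ relevant states and reward $1$ at the absorbing target $T$, so that $V_T=1/\delta$. Relevant row $i$ has $d_i=e_T-e_i$. At the dummy row $T$ both actions self-loop, so $d_T=0$ and $\Delta r_T=0$. Assumption~\ref{asm:model} is immediate because each endpoint row is a point mass. Solving the one-state Bellman equation at row $i$ gives
\[
V_i(x)=f(x_i):=\frac{\gamma x_i}{\delta(\delta+\gamma x_i)} ,
\]
which recovers the displayed formula for $J$. The two derivatives are
\[
f'(x)=\frac{\gamma}{(\delta+\gamma x)^2}>0,
\qquad
f''(x)=-\frac{2\gamma^2}{(\delta+\gamma x)^3}.
\]
Hence $\tau=0$.

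\textbf{True curvature $\beta$.} $J$ is separable, so $H$ is diagonal with $H_{ii}=f''(x_i)/m$ on relevant rows and $0$ on the dummy row. For a relevant pair, the exchange curvature $H_{ii}+H_{jj}$ is most negative at $x_i=x_j=0$, where it equals $-4\gamma^2/(m\delta^3)$. A pair involving the dummy only reaches $-2\gamma^2/(m\delta^3)$. Therefore $\beta=4\gamma^2/(m\delta^3)$.

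\textbf{The gap.} Because $f$ is increasing and concave, Jensen's inequality under $\sum_i x_i\le m/2$ gives the continuous optimum $x^\star_i=1/2$ on relevant rows, with the dummy coordinate arbitrary. This yields $J(x^\star)=f(1/2)=\gamma/(\delta(2-\gamma))$. A binary point sets at most $m/2$ relevant rows to $1$, since the dummy contributes nothing. Its best value is therefore $\tfrac12 f(1)=\gamma/(2\delta)$. Subtracting gives $\Gap(\cC_k)=\gamma^2/[2\delta(2-\gamma)]$.

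\textbf{Bellman envelopes.} For the value function, $\overline V_T=\underline V_T=1/\delta$. At a relevant row, $\underline V_i=0$ (always self-loop) and $\overline V_i=\gamma/\delta$ (always move). With $d_i^+=e_T$ and $d_i^-=e_i$ this gives
\[
A_i=\gamma\,[V_T-\overline V_i,\ V_T-\underline V_i]=[\gamma,\ \gamma/\delta]\subset(0,\infty).
\]
For the stage reward $e_j$, $h_j$ vanishes at $T$ and at every relevant $i\ne j$, since no path leads from $i$ to $j$. At $j$ itself it ranges over $[1,1/\delta]$. Consequently $Q_j=\tfrac1m[1,1/\delta]$ and $B_{jj}=-\gamma[1/\delta,1]$. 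The cross intervals $B_{ij}$ for $i\ne j$ are exactly $\{0\}$, and all dummy intervals vanish. Then $G_i\subset[0,\infty)$, so $\widehat\tau=0$. Moreover $\inf\mathcal H_{ii}=2\cdot\frac{1}{m\delta}\cdot(-\frac{\gamma}{\delta})\cdot\frac{\gamma}{\delta}=-2\gamma^2/(m\delta^3)$ and $\mathcal H_{ij}=\{0\}$. Hence $\widehat\beta=4\gamma^2/(m\delta^3)=\beta$. This step is the main obstacle: it requires correctly identifying the min/max Bellman fixed points for every reward vector, including the reachability argument that forces $h_j(i)=0$ and hence the exact vanishing of the cross terms. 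That vanishing is precisely what prevents interval overestimation here.

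\textbf{The three certificates.} At $x^\star$ the coordinate sum is $m/2$, already an integer, so $z=x^\star$ and every $\Delta_i=0$. The fractional set $F(z)$ consists of the $m$ relevant rows and $\Phi(z)=m/4$. This gives $B_{\mathrm{cand}}(x^\star)=\widehat\beta\, m/16=\gamma^2/(4\delta^3)$. With $d=m+1$ we have $q_{d,k}=m/2$ and
\[
c_{d,k}=\frac{m(m+2)}{16(m+1)},
\]
so $B_{\mathrm{unif}}=\frac{m+2}{m+1}B_{\mathrm{cand}}(x^\star)$. Finally $\Bcert=m\widehat\beta/8=2B_{\mathrm{cand}}(x^\star)$.
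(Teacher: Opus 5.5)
Your proposal is correct and follows the paper's own proof essentially step for step: the separable closed form for $J$, Jensen for the continuous optimum, explicit min/max Bellman fixed points giving $Q_i=[1/m,1/(m\delta)]$, $A_i=[\gamma,\gamma/\delta]$, $B_{ii}=[-\gamma/\delta,-\gamma]$ and $B_{ij}=\{0\}$, and then direct evaluation of $B_{\mathrm{cand}}(x^\star)$, $B_{\mathrm{unif}}$ and $\Bcert$.

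Two wording slips do not affect any result. First, the dummy coordinate of $x^\star$ is forced to be $0$, not arbitrary: the relevant rows already use the whole budget $m/2$, and this is also why $z=x^\star$. Second, not every dummy interval vanishes, since $Q_T$ and $B_{iT}$ are nonzero; only $A_T$ and $B_{T\cdot}$ vanish, which is exactly what makes $G_T$, $\mathcal H_{TT}$ and $\mathcal H_{iT}$ equal to $\{0\}$.
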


\begin{corollary}[A linear dimension coefficient is necessary]
\label{cor:dimension-lower-bound}
Fix $\gamma\in(0,1)$. If $c_d\ge0$ satisfies
$\Gap(\cC_k)\le\tau+c_d\beta$ on every member of this family, then
$c_d\ge(d-1)(1-\gamma)^2/[8(2-\gamma)]$.
Thus $c_d=\Omega(d)$ along this family, whose budget is $k=(d-1)/2$.
\end{corollary}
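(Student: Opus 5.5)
The corollary follows directly from Proposition~\ref{prop:tightness}: I will substitute its closed forms into the hypothesized inequality and solve for $c_d$. On any member of the family with even $m\ge2$, Proposition~\ref{prop:tightness} gives $\tau=0$, $\beta=4\gamma^2/(m\delta^3)$ and $\Gap(\cC_k)=\gamma^2/[2\delta(2-\gamma)]$, where $\delta=1-\gamma$. The assumed bound $\Gap(\cC_k)\le\tau+c_d\beta$ therefore reads
\[
\frac{\gamma^2}{2\delta(2-\gamma)}\le c_d\,\frac{4\gamma^2}{m\delta^3}.
\]
Since $\gamma>0$, the common factor $\gamma^2$ can be divided out. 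This gives $c_d\ge m\delta^2/[8(2-\gamma)]$.

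Next I rewrite this in terms of the dimension. The family has $d=m+1$, so $m=d-1$, and the bound becomes $c_d\ge (d-1)(1-\gamma)^2/[8(2-\gamma)]$, exactly as stated. The budget is $k=m/2=(d-1)/2$. Because $\gamma$ is fixed, the prefactor $(1-\gamma)^2/[8(2-\gamma)]$ is a positive constant, and hence $c_d=\Omega(d)$ along the odd dimensions $d=m+1$ realized by the family.

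The only point needing care is that the hypothesis is imposed on the members with a given $d$, and each such $d$ corresponds to one instance $m=d-1$. The inequality is therefore applied pointwise in $d$, and no uniformity over $\gamma$ is involved. The substantive work is entirely in Proposition~\ref{prop:tightness}, namely the exact values of $\beta$ and of the gap, and the corollary takes these as given.
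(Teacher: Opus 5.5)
Your proposal is correct and matches the paper's proof. Both substitute $\tau=0$ and the closed forms of $\beta>0$ and $\Gap(\cC_k)$ from Proposition~\ref{prop:tightness}, divide by $\beta$ to obtain $c_d\ge m(1-\gamma)^2/[8(2-\gamma)]$, and then use $m=d-1$.
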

\begin{proof}
Proposition~\ref{prop:tightness} gives $\tau=0$ and $\beta>0$, so
$c_d\ge\Gap(\cC_k)/\beta=m(1-\gamma)^2/[8(2-\gamma)]$.
\end{proof}

Appendix~\ref{app:tightness} derives these formulas. From $x^\star$,
Algorithm~\ref{alg:round} reaches a binary optimum after $m/2$ exchanges, so
its actual loss equals $\Gap(\cC_k)$. The legacy bound overcounts the exchange
contribution by a factor of two. The uniform bound reduces this slack to
$(m+2)/(m+1)$ relative to the candidate bound. Nevertheless,
$B_{\mathrm{cand}}(x^\star)/\Gap(\cC_k)=(2-\gamma)/(2\delta^2)$ still diverges
as $\gamma\uparrow1$. Here $\widehat\beta=\beta$, so interval propagation
does not explain this residual slack.

One might suspect that the long-horizon conservatism comes only from evaluating
curvature globally. On this family, even exact segment-local extrema retain the
same divergent order.
An executed exchange has $g(t)=J(z+t(\epsi_i-\epsi_j))$ with
$t\in[-1/2,1/2]$. Although it starts at two half-integral coordinates,
its endpoints set one coordinate to zero, retaining large curvature.
The exact segment maximum is
$\beta_{\mathrm{seg}}=2\gamma^2(\delta^{-3}+1)/m$.
Summing its one-step charges gives $B_{\mathrm{seg}}=\gamma^2(\delta^{-3}+1)/8$,
and hence $B_{\mathrm{seg}}/\Gap(\cC_k)=(2-\gamma)(1+\delta^3)/(4\delta^2)$.
Thus segment-local maximization alone does not remove the long-horizon slack.

On this family, weighted trajectory interpolation removes the residual slack.
For any $C^2$ exchange function on $[a,b]$ with $a<0<b$, let
$K_{a,b}(t)=b(t-a)/(b-a)$ for $t\le0$ and
$K_{a,b}(t)=(-a)(b-t)/(b-a)$ for $t\ge0$.
The nonnegative kernel gives the loss bound
$[g(0)-\max\{g(a),g(b)\}]_+\le\int_a^b K_{a,b}(t)[-g''(t)]_+\,dt$.
A constant curvature bound recovers the old charge because
$\int_a^b K_{a,b}(t)\,dt=(-a)b/2$.
On this family, $g''<0$ and $g(a)=g(b)$, so the integral equals the actual
one-step loss. Summing yields $B_{\mathrm{int}}=\Gap(\cC_k)$ for every $\gamma$.
At $\gamma=0.95$, the candidate and segment-maximum ratios are 210 and
105.013, respectively, whereas the exact integral ratio is 1; reliable
256-subinterval upper sums give 1.044. Appendix~\ref{app:weighted-interpolation}
proves the identity and Appendix~\ref{app:mechanistic-active} reports exact-rational
checks. The same interpolation bound can be computed on general exchanges.

\begin{proposition}[Rational exchange representation]
\label{prop:rational-exchange}
Under Assumption~\ref{asm:model}, every feasible exchange has
$g(t)=p(t)/q(t)$, where $\deg p,\deg q\le2$, $q(0)=1$, and $q(t)>0$
throughout its interval. Moreover, $g''(t)=h(t)/q(t)^3$ with $\deg h\le3$.
\end{proposition}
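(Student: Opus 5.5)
The plan is to write the exchange as a rank-two perturbation of the resolvent at the base point and apply the Woodbury identity. Fix a feasible exchange of coordinates \(i\ne j\) from the point \(z\), and write \(x(t)=z+t(\epsi_i-\epsi_j)\) for \(t\) in its interval. By \eqref{eq:affine-mdp},
\[
P(x(t))=P(z)+t\,(e_id_i^\top-e_jd_j^\top),
\qquad
r(x(t))=r(z)+t\,(e_i\Delta r_i-e_j\Delta r_j).
\]
Let \(U=[e_i,\,e_j]\in\R^{S\times2}\) and \(W=[d_i,\,-d_j]\in\R^{S\times2}\). Then \(I-\gamma P(x(t))=M-t\gamma UW^\top\), where \(M=I-\gamma P(z)\) is invertible. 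Writing \(R=R(z)\), Woodbury gives
\[
R(x(t))=R+t\gamma RU\,\bigl(I_2-t\gamma W^\top RU\bigr)^{-1}W^\top R,
\]
valid whenever \(D(t):=\det(I_2-t\gamma W^\top RU)\neq0\).

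Next we identify \(q(t)=D(t)\). Since \(D\) is the determinant of \(I_2\) minus \(t\) times a fixed \(2\times2\) matrix, it is a polynomial of degree at most two with \(D(0)=1\). By the matrix determinant lemma, \(D(t)=\det(I-\gamma P(x(t)))/\det(M)\). Assumption~\ref{asm:model} makes \(P(x(t))\) stochastic on the whole feasible interval, because the interval keeps both coordinates in \([0,1]\). Hence \(\gamma P(x(t))\) has spectral radius at most \(\gamma<1\), so \(\det(I-\gamma P(x(t)))\ne0\). This determinant is continuous in \(t\) and never vanishes on the connected interval, so \(D\) has constant sign there. Since \(D(0)=1\), we get \(q(t)>0\) throughout, which is the positivity claim.

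For the numerator, write \(J(x(t))=\mu^\top R(x(t))\,r(x(t))\). The factor \(r(x(t))\) is affine in \(t\), and \((I_2-tN)^{-1}=\operatorname{adj}(I_2-tN)/D(t)\) has an adjugate that is affine in \(t\) because the matrix is \(2\times2\). A naive product therefore has degree up to three, so we must show a cancellation. The first idea is to split the expression as \(\mu^\top R(x(t))r(z)+t\,\mu^\top R(x(t))U c\), with \(c=(\Delta r_i,-\Delta r_j)^\top\). Each term is \(\mu^\top R(x(t))v\) times a factor, and \(\mu^\top R(x(t))v\) has the form \(\mu^\top Rv+t\gamma\,\alpha^\top\operatorname{adj}(\cdot)\beta/D\), whose numerator has degree two. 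This still does not bound the total by two. The cleaner route is to apply the same rank-two argument to the augmented value system: \(J(x(t))\) is \(\mu^\top\) times the solution of the linear system
\[
(M-t\gamma UW^\top)V=r(z)+tUc.
\]
By Cramer's rule, \(V_s(t)=\det(\text{matrix with column }s\text{ replaced})/\det(I-\gamma P(x(t)))\). The perturbation of the system matrix is supported in rows \(i,j\), and the right-hand side's perturbation lies in the same rows. In the replaced matrix, every row other than \(i,j\) is constant in \(t\), while rows \(i\) and \(j\) are affine in \(t\). Expanding by multilinearity in those two rows gives a polynomial of degree at most two. Dividing numerator and denominator by \(\det M\) yields \(g=p/q\) with \(\deg p\le2\). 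This row-support argument also gives \(\deg q\le 2\) directly, and I expect it to be the key step.

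Finally, differentiate the quotient. We have \(g'=(p'q-pq')/q^2\), and the leading terms of \(p'q-pq'\) cancel, so its numerator \(n\) has degree at most two. Then
\[
g''=\frac{n'q-2nq'}{q^3},
\]
and \(h:=n'q-2nq'\) has degree at most three. Positivity of \(q\) makes every expression well defined on the interval.
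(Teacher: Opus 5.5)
Your proof is correct. For $q$ and for $g''$ it follows the paper's skeleton: the paper also takes $q(t)=\det(I-tM)$ for a $2\times2$ matrix $M$, and uses the determinant lemma $q(t)=\det A(t)/\det A$. It then gets $q>0$ from invertibility on the feasible interval, $q(0)=1$, and continuity, and bounds $\deg h\le3$ through the cancellation of the cubic coefficients of $p'q-pq'$.

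Where you differ is the numerator. The paper avoids the degree-three product you ran into by applying the rank-two update to the value system rather than to the resolvent alone. Setting $w=\eta+W^\top V$, with $V=R\,r(z)$, absorbs the reward perturbation into the same correction. This gives $V(t)=V+tRU(I-tM)^{-1}w$, and hence $p(t)=J(z)q(t)+t\,c^\top\operatorname{adj}(I-tM)w$, whose degree is visibly at most two. Your Cramer's-rule argument replaces this algebra with a structural count. Only rows $i,j$ of the system matrix and of the right-hand side depend on $t$, so multilinearity in those two rows bounds both determinants by degree two at once.

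Your route is more elementary and needs no hidden cancellation, but it does not produce the coefficients in usable form. It expresses them through $S\times S$ determinants. The paper's route reads $p$ and $q$ off the $2\times2$ matrix $M$ and the vectors $c,w$, in $O(S^2)$ work from a stored resolvent. The Bernstein enclosures and the rank-two rounding baseline in Appendix~\ref{app:rational-exchange} rely on exactly that.

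Your Woodbury display is never actually used; the matrix determinant lemma alone suffices to identify $q$ and prove its positivity. The abandoned first attempt should also be dropped from a final write-up.
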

\paragraph{Computable trajectory bound.}
Appendix~\ref{app:rational-exchange} derives the rational coefficients from a
rank-two resolvent update. Exact quadratic extrema and Bernstein polynomial
bounds enclose $[-g'']_+$ on each subinterval. Multiplication by the exact
kernel integrals gives a reliable upper sum $U_t$ for each exchange. With
Phase-1 output $z$, define
$B_{\mathrm{loc}}=[J(x)-J(z)]_++\sum_t U_t$.
Then $[J(x)-J(y)]_+\le B_{\mathrm{loc}}$.
The resulting certificate is available during rounding and does not require
global Bellman envelopes. Section~\ref{sec:calibration} evaluates its
tightness and cost. Before the exchange trajectory is known, certification
still relies on $B_{\mathrm{unif}}$ or $B_{\mathrm{cand}}$.

%% file: sections/05_hardness.tex
\section{Complexity of exact curvature}
\label{sec:hardness}

Theorem~\ref{thm:certificate} uses the tractable upper bound
\(\widehat\beta\) in place of \(\beta\). Could exact curvature provide an
efficient, uniformly sharper alternative? The threshold problem is NP-hard
even on a strongly restricted MDP class, and its binary restriction is
NP-complete.

All \(N\) rows in this section are editable. For \(v\in\R^N\), define
\begin{align}
\kappa_{\mathcal M}(v)
&=\sup_{x\in[0,1]^N}\pos{-v^\top\nabla^2J(x)v},
&
\kappa_{\mathcal M}^{\bin}(v)
&=\max_{x\in\{0,1\}^N}\pos{-v^\top\nabla^2J(x)v},
\label{eq:kappa-both}\\
\beta_{\mathcal M}
&=\max_{a<b}\kappa_{\mathcal M}(\epsi_a-\epsi_b),
&
\beta_{\mathcal M}^{\bin}
&=\max_{a<b}\kappa_{\mathcal M}^{\bin}(\epsi_a-\epsi_b).
\label{eq:beta-bin}
\end{align}

A Woodbury elimination exposes the relevant dependence on the policy
variables. Let \(D=[d_1\ \cdots\ d_N]\), \(X=\diag(x)\),
\(R_0=(I-\gamma P^0)^{-1}\), \(u^\top=\mu^\top R_0\),
\(C=\gamma D^\top R_0\), and
\(a=\Delta+\gamma D^\top R_0r^0\). Then
$
J(x)=J(0)+u^\top(I-XC)^{-1}Xa .
\label{eq:reduced-performance}
$
Let \(\mathcal G(C)\) contain the arc \(i\to j\) when \(C_{ij}\ne0\).

\begin{definition}[Restricted MDP class \(\mathfrak B_3\)]
\label{def:B3}
The class \(\mathfrak B_3\) contains rational row-mixture MDPs satisfying
Assumption~\ref{asm:model}, with \(\gamma=1/2\), a point-mass initial
distribution, two actions per row, rewards in \([0,1]\), strictly positive
transition endpoints, and acyclic \(\mathcal G(C)\) of depth at most three.
Membership is decidable in polynomial time.
\end{definition}

\begin{theorem}[Complexity of exact pairwise curvature]
\label{thm:beta-complexity}
The language
$
\{\langle\mathcal M,T\rangle:\mathcal M\in\mathfrak B_3,\
\beta_{\mathcal M}^{\bin}\ge T\}
$
is NP-complete, and thresholding the continuous-cube quantity
\(\beta_{\mathcal M}\) is NP-hard.
\end{theorem}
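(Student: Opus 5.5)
The plan is to exploit the acyclic, depth-bounded structure of $\mathcal G(C)$ so that $J$ becomes an explicit multilinear polynomial, and then to reduce from \(\MaxCut\).

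\textbf{Polynomial structure.} In the Woodbury form $J(x)=J(0)+u^\top(I-XC)^{-1}Xa$, acyclicity of $\mathcal G(C)$ with depth at most three makes $XC$ nilpotent of index at most four. Hence
\[
(I-XC)^{-1}=\sum_{\ell=0}^{3}(XC)^\ell,
\]
and $J(x)$ is a sum over directed paths $i_0\to\cdots\to i_\ell$ with $\ell\le3$ of terms $u_{i_0}x_{i_0}C_{i_0i_1}x_{i_1}\cdots x_{i_\ell}a_{i_\ell}$. Along an acyclic path every index is distinct, so $J$ is multilinear. Consequently $H_{aa}\equiv0$, and
\[
-(\epsi_a-\epsi_b)^\top\nabla^2J(x)(\epsi_a-\epsi_b)=2\,\partial_a\partial_bJ(x).
\]
This mixed derivative does not depend on $x_a$ or $x_b$ and is multilinear in the remaining coordinates. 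A multilinear function attains its supremum over the cube at a vertex, so $\kappa_{\mathcal M}(\epsi_a-\epsi_b)=\kappa^{\bin}_{\mathcal M}(\epsi_a-\epsi_b)$ and therefore $\beta_{\mathcal M}=\beta^{\bin}_{\mathcal M}$ on $\mathfrak B_3$. NP-hardness of the binary threshold thus transfers immediately to the continuous quantity.

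\textbf{Membership in NP.} The certificate is a pair $a<b$ together with $x\in\{0,1\}^N$. The verifier computes $u$, $C$ and $a$ by exact rational Gaussian elimination, which is polynomial in the bit length. It then evaluates $H_{aa}+H_{bb}-2H_{ab}$ at $x$, either via the identities \eqref{eq:hessian-diag}--\eqref{eq:hessian-cross} or via the finite path expansion, and compares the result with $T$. Membership of $\mathcal M$ in $\mathfrak B_3$ is checkable in polynomial time, as stated in Definition~\ref{def:B3}.

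\textbf{Hardness.} The reduction is from \(\MaxCut\) on a graph $G=(V,E)$. I would introduce two distinguished rows $a,b$ and one row $x_v$ for each vertex $v$, arranged so that $\partial_a\partial_bJ$ collects a path $a\to v\to b$ contributing $\alpha\,x_v\deg(v)$, and paths $a\to u\to v\to b$ contributing $-2\alpha\,x_ux_v$ for each edge $uv$. Additional constant paths can be added if convenient. The target identity is then
\[
2\,\partial_a\partial_bJ(x)=c_0+2\alpha\sum_{uv\in E}\bigl(x_u+x_v-2x_ux_v\bigr),
\]
which equals $c_0+2\alpha\,\mathrm{cut}(x)$. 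Two further checks are needed:
\begin{itemize}
\item[(i)] Other pairs $(a',b')$ must not exceed this value. I would scale the gadget entries between $a,b$ above all other couplings, or argue directly that every other mixed derivative is bounded by $c_0$.
\item[(ii)] The threshold is set to $T=c_0+2\alpha K$, which yields the reduction from deciding whether $G$ has a cut of size at least $K$.
\end{itemize}

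\textbf{Main obstacle.} The hardest step is realizing the prescribed signed couplings $C_{ij}=\gamma d_i^\top R_0e_j$, together with the required $u$ and $a$, inside $\mathfrak B_3$. The constraints are $\gamma=1/2$, strictly positive stochastic endpoint rows, rewards in $[0,1]$, a point-mass $\mu$, and polynomial bit size. My first attempt would use layered auxiliary uneditable states, so that $R_0e_j$ has a controlled support pattern. Each difference row $d_i$ would be a small signed perturbation between two positive distributions, built as a uniform base plus $\pm\eta$ mass on the designated successor states. Then $C_{ij}$ equals the prescribed sign pattern up to a common positive scale plus a uniform background term. I would cancel the background exactly using paired states with opposite perturbations, so that $C$ has the exact zero pattern and $\mathcal G(C)$ stays acyclic of depth three. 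If exact cancellation fails, the fallback is to choose $\eta$ and the background small enough that the residual terms are bounded below $\alpha/2$ in total, which preserves the gap between cut values $K-1$ and $K$.
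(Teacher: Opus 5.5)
The multilinearity step is correct, and slightly stronger than the paper's. Every path in an acyclic $\mathcal G(C)$ visits distinct rows, so $J$ is multi-affine on all of $\mathfrak B_3$. Hence $\kappa_{\mathcal M}(\epsi_a-\epsi_b)=\kappa^{\bin}_{\mathcal M}(\epsi_a-\epsi_b)$ on every instance, whereas the paper invokes vertex attainment only on the reduction image. Your NP certificate (a pair plus a binary control vector, evaluated exactly) is the paper's.

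\textbf{The gap is in the hardness gadget.} With one row per vertex and an arc $u\to v$ for each edge, $\mathcal G(C)$ contains $a\to u\to v\to w\to b$ whenever the orientation has a directed path $u\to v\to w$. An orientation with no directed $2$-path makes every vertex a source or a sink, so the graph is bipartite and MAX-CUT is trivial. Your instances therefore leave $\mathfrak B_3$. Worse, these longer paths put cubic and higher monomials into $\partial_a\partial_bJ$, so the identity $2\partial_a\partial_bJ=c_0+2\alpha\,\mathrm{cut}(x)$ is false.

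Rescaling cannot remove them. In your gadget, the coefficient $u_aC_{au}C_{uv}C_{vw}C_{wb}a_b$ of $x_ux_vx_w$ in $\partial_a\partial_bJ$ equals the product of the $x_ux_v$ and $x_vx_w$ coefficients divided by the $x_v$ coefficient. With your targets this is $4\alpha/\deg(v)$, the same order as the cut terms. Your small-$\eta$ fallback addresses a background term, not this path contribution.

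\textbf{The missing idea is the paper's vertex doubling with a complementarity penalty.} Each vertex gets two rows $L_i,R_i$, and graph arcs run only from the $L$ layer to the $R$ layer. The designated pair sits on top, $p\to q\to L_i\to R_j$, so the depth is three. Then $\partial_p\partial_qJ$ receives only linear terms (rewards at $L_i$ and at $R_j$) and bilinear terms $\ell_ir_j$. The arcs $C_{L_iR_i}=-2\Lambda\rho$ with $\Lambda=2m+1$ contribute $\Lambda(\ell_i+r_i-2\ell_ir_i)$. This forces $r_i=1-\ell_i$ at every maximizer, after which $-\sum_{i\to j}(\ell_i+r_j-2\ell_ir_j)=\mathrm{cut}-m$.

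Two smaller points also need changing:
\begin{itemize}
\item Realization is easier than you expect. With $P^0=\one\pi^\top$ and $\gamma=1/2$, one has $R_0=I+\one\pi^\top$. Since $d_i^\top\one=0$ automatically, $C=\gamma D^\top$ exactly, so there is no background to cancel.
\item Pair separation cannot come from scaling the $(a,b)$ coupling alone. Each monomial $x_ax_bx_v$ of $J$ feeds both $\partial_a\partial_bJ$ and $\partial_a\partial_vJ$ with the same coefficient. The paper instead bounds every nuisance pair by $O(\Lambda)$ coefficient mass, against $\Theta(\Lambda n)$ for the designated pair, which is why it pads the graph to $n\ge6$.
\end{itemize}
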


\paragraph{The construction.}
The proof maps unweighted MAX-CUT to a rational instance
\(\mathcal M_G\in\mathfrak B_3\). After padding the graph, the construction
creates one designated exchange \((p,q)\) whose curvature satisfies
$
\beta_{\mathcal M_G}
=\beta_{\mathcal M_G}^{\bin}
=\kappa_{\mathcal M_G}(\epsi_p-\epsi_q)
=\alpha_G\{\Lambda n-m+\MaxCut(G)\},
\label{eq:beta-hardness}
$
where \(\Lambda=2m+1\) and \(\alpha_G>0\) has polynomial encoding length.
The reduced curvature is multi-affine, so a maximizer exists at a vertex.
The choice of \(\Lambda\) forces complementary graph controls and recovers the
MAX-CUT objective. Coefficient bounds isolate the designated pair from every
other exchange. For the binary restriction, that pair and a binary control
vector form a polynomial certificate. The construction also preserves local
exchange feasibility under the row budget \(k=n+1\).

Appendix~\ref{app:beta-isolation} gives the rational transition endpoints,
positivity checks, path expansion, pair-separation bounds, budget bridge, and
encoding analysis. The reduction concerns exact exchange curvature rather than
return maximization. Thus exact thresholding does not provide a
polynomial-time alternative to the Bellman-envelope bound in general, unless
\(\mathrm{P}=\mathrm{NP}\).

%% file: sections/06_calibration.tex
\section{Evaluation}
\label{sec:calibration}

The evaluation addresses four questions: when deployment can be certified
before rounding, how much return the rounded policy retains, how tight the
bounds remain at long horizons, and what certification costs. Exact rational
arithmetic is used for binary optima and theorem checks. The scaling study
uses outward-rounded validated floating point. Appendix~\ref{app:calibration}
contains the generators, fixed seeds, protocols, and complete tables.
Structured-suite comparisons retain the legacy $\Bcert$ baseline.
Budget-aware $B_{\mathrm{unif}}$ results are reported for the closed-form and
coupled suites.

\paragraph{Soundness calibration.}
On the original exact-rational grid of 144 cases across \textsc{gridworld},
\textsc{queue}, \textsc{maintenance}, and \textsc{garnet}, every witnessed
continuous-to-binary advantage is bounded by \(\Bcert\). The smallest nonzero
certificate is \(2.95\ten{-6}\), while naive midpoint FP64 reports a false zero
in 97/144 cases. This grid tests soundness rather than active rounding because
141 of the 144 quantized candidates are already binary.

\paragraph{Active rounding.}
The closed-form family of Proposition~\ref{prop:tightness} uses
\(m\in\{4,8,16,32,64\}\) and
\(\gamma\in\{0.2,0.5,0.8,0.95\}\). Algorithm~\ref{alg:round} performs exactly
\(m/2\) exchanges in all 20 cases, reaches a binary optimum, and incurs realized
loss exactly equal to \(\Gap(\cC_k)\). Moreover,
\(B_{\mathrm{cand}}(x^\star)=\Bcert/2\), while
$B_{\mathrm{unif}}=\frac{m+2}{m+1}B_{\mathrm{cand}}(x^\star)$. The exact
weighted bound equals the gap, as shown in Section~\ref{sec:tightness}.

The separately frozen structured suite contains 144 cases, of which 139 have
genuinely fractional candidates. Every active candidate has positive witnessed
advantage over the exact binary optimum. The median candidate has four
fractional coordinates and requires three exchanges. All certified rounding
checks pass, and Algorithm~\ref{alg:round} reaches the exact binary optimum in
119 of 144 cases. Across the 139 active cases, its mean normalized optimality
gap is $0.185\%$, compared with $0.178\%$ for greedy. The corresponding median
return-evaluation counts are 6 and 26. Appendix~D reports the complete baseline
comparison.
\paragraph{Pre-rounding certification.}
For a pre-rounding bound \(B\), define the certified margin
$M_B(x):=J(x)-B-J(0)$. A positive margin guarantees that
Algorithm~\ref{alg:round} returns a deployable policy with \(J(y)>J(0)\).
Table~\ref{tab:deployment} reports coverage on the 139 active structured cases.
The legacy model-only certificate covers \(67/139=48.2\%\) of them. Candidate
geometry raises coverage to \(103/139=74.1\%\) without another Bellman solve.
At \(\gamma=4/5\), the corresponding counts are 0/71 and 35/71. All 139 final
policies improve over the baseline, so an uncovered case is an abstention rather
than a failed deployment. Appendix~\ref{app:calibration} gives the full
tightness hierarchy. The median bound-to-loss ratio falls from 70.8 for
\(\Bcert\) to 12.1 for \(B_{\mathrm{cand}}\).

\begin{table}[t]
\centering
\caption{Certified deployment improvement on the 139 active structured cases.
The legacy $B_{\rm Bell}$ and $B_{\rm cand}$ are pre-rounding certificates,
whereas $B_{\rm trace}$ is a post-trace audit. Direct reports the exact check
$J(y)>J(0)$, with counts deduced from the stored returns.}
\label{tab:deployment}
\small
\begin{tabular}{lccccc}
\toprule
Discount & Active & $B_{\rm Bell}$ & $B_{\rm cand}$ &
$B_{\rm trace}$ & Direct \\
\midrule
$\gamma=1/2$ & 68 & 67 & 68 & 68 & 68 \\
$\gamma=4/5$ & 71 & 0 & 35 & 54 & 71 \\
All           & 139 & 67 & 103 & 122 & 139 \\
\bottomrule
\end{tabular}
\end{table}

\paragraph{Direct rounding followed by verification.}
For a fixed candidate and rounded output, every sound bound satisfies
$J(x)-B-J(0)\le J(y)-J(0)$. An exact direct check therefore accepts every
deployment accepted by the bound. Across the 20 closed-form cases, the legacy,
budget-aware, and candidate bounds accept 5, 10, and 10 cases. Direct
verification accepts all 20. The return of the final chosen endpoint is already
cached, so verification adds no evaluation to the 496 used for rounding.
Stored exact returns similarly imply 139/139 direct acceptances on the
structured suite. Pre-rounding certification covers 103/139 cases, while the
post-trace bound covers 122/139. The distinction is temporal: a loss guarantee
is available before deployment, whereas direct verification requires the final
policy.

\paragraph{General exchange bounds and complete costs.}
The 24-instance coupled suite combines two transition structures,
$d\in\{6,8\}$, three discounts, and two seeds. Every candidate has positive
advantage over the enumerated binary optimum. At $\gamma=0.95$, the median
bound-to-loss ratio falls from 402.3 for $B_{\mathrm{cand}}$ to 2.08 for
$B_{\mathrm{loc}}$. The trajectory bound certifies improvement in all 24
cases. It preserves the rounded policy while tightening the loss guarantee.

Including continuous search and exact setup, median total times are 7.60 ms
for direct rounding, 9.43 ms for rank-two rounding, and 36.19 ms for local
integration. Each instance is timed three times. The tighter guarantee
therefore incurs additional computation on this small synthetic suite.
Appendix~\ref{app:general-exchange-study} reports the complete comparison,
exact audits, and separate scaling checks.

\paragraph{Scaling and validated arithmetic.}
We next test certification beyond the exact-enumeration regime. The sparse
family uses either \(S=d\in\{16,32,64,128\}\), or \(d=16\) with
\(S\in\{64,128,256\}\). Full validated certification takes 4.45 s at
\(S=d=128\) and 1.31 s at \((S,d)=(256,16)\). In the latter case, Bellman
solves account for 1.28 s and interval propagation for 0.031 s, consistent
with the \((2d+2)\Tbell(S)+O(d^2S)\) decomposition.

Validated intervals contain the exact recomputation in all six audited
reference cases, whereas naive FP64 passes the Bellman audit in none. This
audit covers only the small reference cases. At \(S=d=16\), the validated
implementation is about \(14.8\times\) faster than the existing exact-rational
pipeline.

%% file: sections/07_conclusion.tex
\section{Conclusion}
\label{sec:conclusion}

Sparse deployment changes the policy-optimization problem because each row edit
also changes future visitation. We showed that Bellman envelopes can control
this coupling uniformly, while budget and candidate geometry sharpen the
guarantee available before rounding. For the realized exchange path, a
rank-two representation supports weighted curvature integration and avoids
paying for remote worst-case policies.

On the structured suite, candidate geometry raises pre-rounding certification
coverage from $48.2\%$ to $74.1\%$. On coupled instances at
$\gamma=0.95$, trajectory integration lowers the median bound-to-loss ratio
from $402.3$ to $2.08$. These findings distinguish two uses of a certificate:
reusable guarantees before deployment and tighter audits once the rounding
path is known.

\paragraph{Scope and extensions.}
The analysis assumes a known finite discounted MDP with two actions per
editable row. Larger action sets, estimated models, partial observability, and
average reward require new arguments. Open questions include the best universal
dimension constant, NP membership of continuous curvature thresholding, and
validated floating-point implementations of the trajectory bounds.

%% file: sections/A_certificate_proofs.tex
\section{Proofs for Section~\ref{sec:certificate}}
\label{app:certificate-proofs}

\begin{algorithm}[t]
\caption{Constructive rounding used in Theorem~\ref{thm:certificate}}
\label{alg:round}
\begin{algorithmic}[1]
\Require \(x\in\cC_k\), with exact or certified endpoint comparisons of \(J\)
\Ensure \(y\in\cC_k^{\bin}\) with
\(J(y)\ge J(x)-B_{\mathrm{unif}}\)
\State \(z\gets x\)
\If{\(\one^\top z\notin\mathbb Z\)}
  \State raise coordinates of \(z\) in the fixed order \(1,2,\ldots,d\)
         until \(\one^\top z=\lceil\one^\top x\rceil\)
         \Comment{deterministic fill and costs at most \(\tau\)}
\EndIf
\While{\(z\) has a fractional coordinate}
  \State pick two fractional coordinates \(i\neq j\)
         \Comment{an integral sum forbids exactly one}
  \State \([a,b]\gets\) maximal interval with
         \(z+t(\epsi_i-\epsi_j)\in[0,1]^d\)
  \State \(z\gets z+t^\star(\epsi_i-\epsi_j)\), where
         \(t^\star\in\arg\max_{t\in\{a,b\}}J\{z+t(\epsi_i-\epsi_j)\}\)
         \Comment{costs at most \(\beta(-a)b/2\)}
\EndWhile
\State \Return \(y\gets z\)
\end{algorithmic}
\end{algorithm}

\begin{proof}[Proof of Theorem~\ref{thm:certificate}]
If $k=0$, then $x=y=0$ and the claim is immediate. Assume $k>0$.

\paragraph{Phase 1.}
Write $s=\one^\top x$, $\ell=\lceil s\rceil\le k$, and $\eta=\ell-s\in[0,1)$.
The cube has sufficient unused capacity to raise coordinates by total mass $\eta$.
Since $\partial_iJ\ge-\tau$ everywhere, integrating along the monotone fill
path gives $J(z)\ge J(x)-\eta\tau\ge J(x)-\tau$.
The resulting $z$ is feasible and has integer sum $\ell$.

\paragraph{One exchange.}
An integer-sum nonbinary point has at least two fractional coordinates.
For such $i,j$, set $v=\epsi_i-\epsi_j$ and $g(t)=J(z+tv)$.
The maximal feasible interval has endpoints $a=\max\{-z_i,z_j-1\}<0$
and $b=\min\{1-z_i,z_j\}>0$, with $b-a\le1$.
If $g''\ge-\beta$, the function $h(t)=g(t)+\beta t^2/2$ is convex.
Apply convexity with endpoint weights $b/(b-a)$ and $(-a)/(b-a)$ to obtain
$g(0)\le\max\{g(a),g(b)\}+\beta(-a)b/2$.
The chosen endpoint therefore loses at most $\beta(-a)b/2\le\beta/8$
and makes at least one coordinate integral.

\paragraph{Potential identity and termination.}
Define $\Phi(u)=\sum_i u_i(1-u_i)$.
If $z_i+z_j\le1$, the endpoints are $(0,z_i+z_j)$ and $(z_i+z_j,0)$;
either choice decreases $\Phi$ by $2z_iz_j=2(-a)b$.
If $z_i+z_j\ge1$, the endpoints are $(1,z_i+z_j-1)$ and
$(z_i+z_j-1,1)$; the decrease is $2(1-z_i)(1-z_j)=2(-a)b$.
Integral coordinates are never selected again. Thus the algorithm terminates
after at most $d-1$ exchanges at $y\in\cC_k^{\bin}$, and telescoping gives
$\sum_t(-a_t)b_t=\Phi(z)/2$.
The accumulated Phase-2 loss is at most $\beta\Phi(z)/4$.

\paragraph{Budget-aware bound.}
Cauchy--Schwarz gives $\sum_i z_i^2\ge\ell^2/d$, hence
$\Phi(z)\le\ell(d-\ell)/d$.
The maximum of $\ell(d-\ell)$ over integers $0\le\ell\le k$ is
$q_{d,k}(d-q_{d,k})$, attained at $\ell=q_{d,k}$.
Therefore $J(y)\ge J(x)-\tau-c_{d,k}\beta\ge J(x)-B_{\mathrm{unif}}$.
For $d=1$, $z$ is already binary and $c_{d,k}=\beta=0$.
Applying the guarantee to a continuous maximizer and using
$J(y)\le\max_{\cC_k^{\bin}}J$ proves the gap bound.
The older charge of $\beta/8$ for each of at most $d-1$ exchanges also
proves the legacy bound $\Bcert$ used in the original evaluation.
\end{proof}

\begin{proof}[Proof of Proposition~\ref{prop:candidate-certificate}]
The result is immediate for $k=0$. Otherwise retain the deterministic
Phase-1 point $z$, increments $\Delta_i=z_i-x_i$, and fractional support $F(z)$.
Proposition~\ref{prop:sensitivity-certificate} gives
$\partial_iJ\ge-\widehat\tau_i$ on the fill path, so
$J(z)\ge J(x)-\sum_i\Delta_i\widehat\tau_i$.
For an executed pair $(i_t,j_t)$, it also gives
$g_t''\ge\underline S_{i_tj_t}\ge-\widehat\beta_{i_tj_t}$,
where $\widehat\beta_{ij}:=[-\underline S_{ij}]_+$.
The one-step argument above thus charges at most
$\widehat\beta_{i_tj_t}(-a_t)b_t/2$.
The support only shrinks, so every executed pair satisfies
$\widehat\beta_{i_tj_t}\le\widehat\beta_{F(z)}$.
Using the already proved potential identity yields
$J(y)\ge J(x)-\sum_i\Delta_i\widehat\tau_i-\widehat\beta_{F(z)}\Phi(z)/4
=J(x)-B_{\mathrm{cand}}(x)$.
Finally, $\sum_i\Delta_i<1$, $\widehat\tau_i\le\widehat\tau$,
$\widehat\beta_{F(z)}\le\widehat\beta$, and $\Phi(z)\le4c_{d,k}$ imply
$B_{\mathrm{cand}}(x)\le B_{\mathrm{unif}}$.
For $d\ge2$, $c_{d,k}\le d/16\le(d-1)/8$ proves
$B_{\mathrm{unif}}\le\Bcert$; for $d=1$ the curvature term vanishes.
\end{proof}

%% file: sections/B_tightness_proof.tex
\section{Proof of Proposition~\ref{prop:tightness}}
\label{app:tightness}

We prove Proposition~\ref{prop:tightness}.

\subsection{MDP construction}

Use one target state with reward one and \(m\) bad states with reward zero.
The target self-loops under both actions.
From bad state~\(i\), the baseline action self-loops and the repair action jumps to the target.
The initial distribution is uniform over the bad states.
Add a dummy editable coordinate at the target whose two actions coincide, giving \(d=m+1\).

\subsection{Return in closed form}

Direct Bellman solution separates the return across rows.
Fix bad state \(i\).  Under the mixed policy \(x_i\in[0,1]\), the transition from state \(i\) sends mass \(x_i\) to the target and mass \(1-x_i\) back to itself.
The discounted occupancy of the target, starting from state \(i\), satisfies the Bellman equation
\[
V_i = \gamma\bigl[(1-x_i)\,V_i + x_i\,V_{\mathrm{target}}\bigr],
\]
where \(V_{\mathrm{target}}=1/(1-\gamma)\) (the target has reward one and self-loops).
Solving for \(V_i\) gives
\begin{align}
V_i
&=\gamma(1-x_i)V_i+\frac{\gamma x_i}{1-\gamma}
\notag\\
V_i\bigl[1-\gamma(1-x_i)\bigr]
&=\frac{\gamma x_i}{1-\gamma}
\notag\\
V_i
&=\frac{\gamma x_i}{(1-\gamma)(1-\gamma+\gamma x_i)},
\label{eq:app-Vi}
\end{align}

Define
\begin{equation}
f_\gamma(t)=\frac{\gamma t}{(1-\gamma)(1-\gamma+\gamma t)}.
\label{eq:app-fgamma}
\end{equation}
Since the initial distribution is uniform over the \(m\) bad states,
\begin{equation}
J(x)=\frac{1}{m}\sum_{i=1}^{m}f_\gamma(x_i).
\label{eq:app-return-decomposition}
\end{equation}
Each summand is increasing and concave.
We verify concavity by computing the second derivative:
\begin{align}
f_\gamma'(t)
&=\frac{\gamma}{(1-\gamma)}\cdot\frac{(1-\gamma+\gamma t)-\gamma t}{(1-\gamma+\gamma t)^2}
=\frac{\gamma}{(1-\gamma+\gamma t)^2},
\notag\\
f_\gamma''(t)
&=\frac{-2\gamma^2}{(1-\gamma+\gamma t)^3}<0,
\label{eq:app-fpp}
\end{align}

\subsection{Continuous and binary optima}

Under budget \(k=m/2\), the constraint is \(\sum_{i=1}^{m+1} x_i\le m/2\).
The dummy coordinate does not affect \(J\), so the optimum sets it to zero
and distributes the full budget over bad rows.
Each \(f_\gamma\) is strictly concave.
By Jensen's inequality applied to the symmetric concave problem, the continuous optimum places equal mass:
\[
x_i^\star=\frac{k}{m}=\frac{1}{2}\quad\text{for every bad row }i,\qquad x_{m+1}^\star=0.
\]
A binary optimum selects any \(m/2\) bad rows (setting them to 1, the rest to 0), with the dummy row at zero.
By symmetry, all such selections achieve the same return.
Hence
\begin{align}
\Gap(\cC_k)
&=f_\gamma(1/2)-\tfrac{1}{2}f_\gamma(1)-\tfrac{1}{2}f_\gamma(0)
\notag\\
&=f_\gamma(1/2)-\tfrac{1}{2}f_\gamma(1)
\notag\\
&=\frac{\gamma}{(1-\gamma)(2-\gamma)}-\frac{\gamma}{2(1-\gamma)}
\notag\\
&=\frac{2\gamma-\gamma(2-\gamma)}{2(1-\gamma)(2-\gamma)}
\notag\\
&=\frac{\gamma^2}{2(1-\gamma)(2-\gamma)},
\label{eq:app-gap}
\end{align}
The gap is independent of \(m\).

\subsection{Interval certificate computation}

Put \(C_\gamma=(1-\gamma)^{-1}\).
For a bad row, the canonical construction of Section~\ref{sec:certificate} gives:
\begin{align}
Q_i&=[1/m,\;C_\gamma/m],\label{eq:app-Qi}\\
A_i&=[\gamma,\;\gamma C_\gamma],\label{eq:app-Ai}\\
B_{ii}&=[-\gamma C_\gamma,\;-\gamma],\label{eq:app-Bii}
\end{align}
and \(B_{ij}=\{0\}\) for distinct bad rows \(i\neq j\).
We derive these as follows.

\paragraph{Derivation of \(Q_i\).}
By \eqref{eq:canonical-Q}, \(Q_i=[\mu^\top\underline{h}_i,\;\mu^\top\overline{h}_i]\).
The auxiliary reward \(e_i\) gives reward one at state \(i\) only.
Since the initial distribution is uniform over bad states, \(\mu^\top h_i = h_i(i)/m\) (the contribution from starting at state \(i\)).
The minimum over all binary policies redirects state \(i\) to the target,
so state \(i\) is visited only at the initial time step:
\(\underline{h}_i(i)=1\).
The maximum keeps state \(i\) self-looping, so it is revisited at every
step: \(\overline{h}_i(i)=1+\gamma+\gamma^2+\cdots=C_\gamma\).
Therefore \(Q_i=[1/m,\;C_\gamma/m]\).

\paragraph{Derivation of \(A_i\).}
Each bad row has zero reward under both actions and the target has reward one.
The transition edit \(d_i\) redirects mass from self-loop at state \(i\) to the target.
By \eqref{eq:canonical-A}, the advantage interval depends on the value
envelope at the target: \(\underline{V}_{\mathrm{target}}=1/(1-\gamma)=C_\gamma\) when all
bad states self-loop, and \(\overline{V}_{\mathrm{target}}=C_\gamma\) identically,
since the target self-loops regardless of other rows' actions.
However, state \(i\)'s own value enters through \(d_i^-\):
the lower endpoint uses \(\overline{V}_i\) (largest value at state \(i\))
and the upper uses \(\underline{V}_i\) (smallest).
Since \(\underline{V}_i=0\) (self-loop at zero reward) and
\(\overline{V}_i=\gamma C_\gamma\) (redirect to target), the interval is
\(A_i=[\gamma(C_\gamma-\gamma C_\gamma),\;\gamma C_\gamma]=[\gamma,\;\gamma C_\gamma]\).

\paragraph{Derivation of \(B_{ii}\).}
Row \(i\)'s transition edit affects its own visitation in the opposite direction: repairing row \(i\) reduces its self-visits.
Formally, \(B_{ii}=\gamma[(d_i^+)^\top\underline{h}_i-(d_i^-)^\top\overline{h}_i,\;(d_i^+)^\top\overline{h}_i-(d_i^-)^\top\underline{h}_i]\).
This gives the negative interval
\(B_{ii}=[-\gamma C_\gamma,\;-\gamma]\).

\paragraph{Computing \(\widehat\tau\) and \(\widehat\beta\).}
The gradient interval for bad row \(i\) is
\[
G_i=Q_i\odot A_i=\bigl[\tfrac{1}{m}\cdot\gamma,\;\tfrac{C_\gamma}{m}\cdot\gamma C_\gamma\bigr]=\bigl[\tfrac{\gamma}{m},\;\tfrac{\gamma C_\gamma^2}{m}\bigr].
\]
Since every lower endpoint is positive,
\(\widehat\tau=\max_i\pos{-\inf G_i}=0\).
This also means \(\tau=0\), since \(\tau\le\widehat\tau\).

For the Hessian diagonal, the triple product \(2\,Q_i\odot B_{ii}\odot A_i\)
has one negative factor (\(B_{ii}\)) and two positive factors (\(Q_i,A_i\)).
The most negative value is attained at the largest magnitudes:
\(q=\sup Q_i=C_\gamma/m\), \(b=\inf B_{ii}=-\gamma C_\gamma\),
\(a=\sup A_i=\gamma C_\gamma\):
\begin{align}
\inf\mathcal{H}_{ii}
&=2\cdot\frac{C_\gamma}{m}\cdot(-\gamma C_\gamma)\cdot\gamma C_\gamma
=\frac{-2\gamma^2 C_\gamma^3}{m}
=\frac{-2\gamma^2}{m(1-\gamma)^3},
\label{eq:app-Hii-lower}
\end{align}

Since \(B_{ij}=\{0\}\) for \(i\neq j\) (distinct bad rows share no transition coupling), the cross-Hessian intervals are zero: \(\mathcal{H}_{ij}=\{0\}\).
An exchange between two bad rows \(i,j\) gives
\begin{align}
\underline{S}_{ij}
&=\inf\mathcal{H}_{ii}+\inf\mathcal{H}_{jj}-2\sup\mathcal{H}_{ij}
\notag\\
&=\frac{-2\gamma^2}{m(1-\gamma)^3}+\frac{-2\gamma^2}{m(1-\gamma)^3}-0
=\frac{-4\gamma^2}{m(1-\gamma)^3},
\label{eq:app-Sij}
\end{align}

Therefore
\begin{equation}
\widehat\beta=\max_{i<j}\pos{-\underline{S}_{ij}}=\frac{4\gamma^2}{m(1-\gamma)^3},
\label{eq:app-betahat-value}
\end{equation}
which is~Proposition~\ref{prop:tightness}.

\paragraph{Exact curvature.}
The exact exchange curvature along \(\epsi_i-\epsi_j\) for two bad rows is
\begin{align}
-(\epsi_i-\epsi_j)^\top\nabla^2 J(x)(\epsi_i-\epsi_j)
&=\frac{1}{m}\bigl[-f_\gamma''(x_i)-f_\gamma''(x_j)\bigr]
\notag\\
&=\frac{1}{m}\Bigl[\frac{2\gamma^2}{(1-\gamma+\gamma x_i)^3}+\frac{2\gamma^2}{(1-\gamma+\gamma x_j)^3}\Bigr].
\label{eq:app-exact-curvature}
\end{align}
The supremum over \([0,1]^2\) is attained at \(x_i=x_j=0\), giving
\[
\beta=\frac{4\gamma^2}{m(1-\gamma)^3}=\widehat\beta.
\]
The envelopes are exact on the worst pair.
Exchanges between the dummy row and a bad row have half the magnitude (only one nontrivial diagonal term), so the pairwise maximum is indeed the bad--bad exchange.

\subsection{Certificate and ratio}

Since \(\widehat\tau=0\) and \(d-1=m\),
\begin{align}
\Bcert
&=\widehat\tau+\frac{d-1}{8}\widehat\beta
=\frac{m}{8}\cdot\frac{4\gamma^2}{m(1-\gamma)^3}
=\frac{\gamma^2}{2(1-\gamma)^3},
\label{eq:app-Bcert}
\end{align}
This is independent of \(m\).

Dividing~\eqref{eq:app-gap} by~\eqref{eq:app-Bcert},
\begin{align}
\frac{\Gap(\cC_k)}{\Bcert}
&=\frac{\gamma^2}{2(1-\gamma)(2-\gamma)}\cdot\frac{2(1-\gamma)^3}{\gamma^2}
\notag\\
&=\frac{(1-\gamma)^2}{2-\gamma},
\label{eq:app-ratio}
\end{align}
It tends to \(1/2\) as \(\gamma\downarrow 0\) and satisfies \(\Gap(\cC_k)/\Bcert\sim(1-\gamma)^2\) as \(\gamma\uparrow 1\).
In particular, \(\Gap(\cC_k)=\Theta(\Bcert)\) as \(m\to\infty\) at any fixed \(\gamma\), so no bound of the form \(\tau+c\beta\) with \(c\) independent of \(d\) can be sound on this family.

At \(m=4\), \(\gamma=1/2\): \(\widehat\beta=2\), \(\Bcert=1\), and \(\Gap(\cC_k)=1/6\).

For the candidate bound, $\Phi(x^\star)=m/4$ and
$\widehat\beta_{F(x^\star)}=\widehat\beta$, so
$B_{\mathrm{cand}}(x^\star)=\gamma^2/[4(1-\gamma)^3]$.
Since $q_{m+1,m/2}=m/2$, the new coefficient is
$c_{m+1,m/2}=m(m+2)/[16(m+1)]$ and
$B_{\mathrm{unif}}=\frac{m+2}{m+1}B_{\mathrm{cand}}(x^\star)$.
This proves the remaining claims of Proposition~\ref{prop:tightness}.

\subsection{Weighted exchange interpolation}
\label{app:weighted-interpolation}

\paragraph{General identity.}
Let $g\in C^2([a,b])$ with $a<0<b$, and use the kernel $K_{a,b}$ defined
in Section~\ref{sec:tightness}. Taylor's formula gives
$g(a)=g(0)+ag'(0)+\int_a^0(t-a)g''(t)\,dt$ and
$g(b)=g(0)+bg'(0)+\int_0^b(b-t)g''(t)\,dt$.
Multiply by $b/(b-a)$ and $(-a)/(b-a)$, respectively, and add.
The first-order terms cancel, yielding the interpolation identity
\begin{equation}
g(0)-\frac{b\,g(a)-a\,g(b)}{b-a}
=-\int_a^b K_{a,b}(t)g''(t)\,dt.
\label{eq:weighted-interpolation}
\end{equation}
The weighted endpoint average does not exceed the larger endpoint.
Since $K_{a,b}\ge0$, taking positive parts bounds the actual loss by
$\int_a^b K_{a,b}(t)[-g''(t)]_+\,dt$.
Integration gives $\int_a^bK_{a,b}(t)\,dt=(-a)b/2$, recovering the
constant-curvature charge as a special case.

\paragraph{Exact segment maximum on the family.}
At $x^\star$, every exchange pairs two remaining half-integral bad rows,
with interval $[-1/2,1/2]$. Each endpoint resolves both coordinates;
after $m/2$ exchanges the policy selects $m/2$ bad rows and is binary-optimal.
Let $\delta=1-\gamma$. Up to an additive constant,
$g(t)=[f_\gamma(1/2+t)+f_\gamma(1/2-t)]/m$.
Thus $-g''(t)$ equals
$\frac{2\gamma^2}{m}\{[\delta+\gamma(1/2+t)]^{-3}+
[\delta+\gamma(1/2-t)]^{-3}\}$.
It is convex and symmetric, so its maximum is attained at either endpoint,
giving $\beta_{\mathrm{seg}}=2\gamma^2(\delta^{-3}+1)/m$.
Multiplying by $1/8$ per exchange and by $m/2$ exchanges gives
$B_{\mathrm{seg}}=\gamma^2(\delta^{-3}+1)/8$.
Dividing by the exact gap yields the ratio stated in Section~\ref{sec:tightness}.

\paragraph{Exact weighted loss and reliable upper sums.}
Here $g''<0$ and $g(-1/2)=g(1/2)$, so
\eqref{eq:weighted-interpolation} is an equality for actual loss.
Direct substitution gives the one-step integral
$[2f_\gamma(1/2)-f_\gamma(0)-f_\gamma(1)]/m
=\gamma^2/[m\delta(2-\gamma)]$.
Summing proves $B_{\mathrm{int}}=\gamma^2/[2\delta(2-\gamma)]=\Gap(\cC_k)$.
For a partition into intervals $I=[u,v]$, convexity of $-g''$ on this family
implies $r_I:=\max\{-g''(u),-g''(v)\}\ge[-g''(t)]_+$ on $I$.
The sum $\sum_I r_I\int_I K_{a,b}(t)\,dt$ is consequently a reliable upper
bound. Rational inputs make every term rational.

\paragraph{Scope for general MDPs.}
If reliable envelopes $[-g_t''(s)]_+\le\rho_t(s)\le\widehat\beta_{i_tj_t}$
are available along a trajectory, replacing each charge by
$\int_{a_t}^{b_t}K_{a_t,b_t}(s)\rho_t(s)\,ds$ gives a bound no larger than
$B_{\mathrm{trace}}$ after adding the same Phase-1 charges.
For a general MDP, polynomial enclosures bound curvature throughout each subinterval.

\subsection{General exchange computation}
\label{app:rational-exchange}

\begin{proof}[Proof of Proposition~\ref{prop:rational-exchange}]
Fix the current policy $z$ and an exchange of rows $i,j$.
Write $A=I-\gamma P(z)$, $R=A^{-1}$, $V=Rr(z)$, and
$U=[e_i,-e_j]$. Let $W=\gamma[d_i,d_j]$ and
$\eta=(\Delta r_i,\Delta r_j)^\top$.
Then $A(t)=A-tUW^\top$ and $r(t)=r(z)+tU\eta$.
Define $M=W^\top RU$, $w=\eta+W^\top V$, and $c^\top=\mu^\top RU$.
The resolvent identity gives
$V(t)=V+tRU(I-tM)^{-1}w$, hence
$g(t)=J(z)+tc^\top(I-tM)^{-1}w$.
Set $q(t)=\det(I-tM)=1-t\operatorname{tr}M+t^2\det M$ and
$p(t)=J(z)q(t)+tc^\top\operatorname{adj}(I-tM)w$.
Both polynomials have degree at most two.
The determinant lemma gives $q(t)=\det A(t)/\det A$.
Every feasible policy has an invertible discounted resolvent, so $q$ has
no zero on the exchange interval. Since $q(0)=1$, continuity gives $q>0$.
Finally, let $N=p'q-pq'$. Its cubic coefficients cancel, so $\deg N\le2$.
Differentiating again gives $g''=h/q^3$, with
$h=N'q-2Nq'$ and $\deg h\le3$.
\end{proof}

\paragraph{Reliable subinterval integration.}
For a subinterval $I=[u,v]$, compute exact extrema
$0<q_I^-\le q(t)\le q_I^+$ by checking its endpoints and the quadratic
stationary point when it lies inside $I$.
After the substitution $t=u+(v-u)s$, express $-h(t)$ in the Bernstein
basis on $s\in[0,1]$. Nonnegative basis weights sum to one, so the minimum
and maximum coefficients give bounds $n_I^-\le-h(t)\le n_I^+$.
Consequently, $\rho_I^-:=\max\{0,n_I^-\}/(q_I^+)^3$ and
$\rho_I^+:=\max\{0,n_I^+\}/(q_I^-)^3$ enclose $[-g''(t)]_+$ on $I$.
Include zero as a partition boundary, so $K_{a,b}$ is linear on each piece.
The exact weights $\omega_I=\int_I K_{a,b}(t)\,dt$ are nonnegative.
Therefore $L=\sum_I\rho_I^-\omega_I$ and $U=\sum_I\rho_I^+\omega_I$
enclose the weighted curvature integral. The implementation bisects the piece
with largest uncertainty $(\rho_I^+-\rho_I^-)\omega_I$.
Every finite partition is sound, even if the refinement budget is exhausted.
Exact rational inputs make coefficients, extrema, weights, and comparisons rational.

\paragraph{Accumulation and relation to global bounds.}
Let $U_t$ be the upper sum for the $t$th executed exchange.
The interpolation inequality bounds each positive exchange loss by $U_t$.
Adding $[J(x)-J(z)]_+$ for Phase~1 proves
$[J(x)-J(y)]_+\le B_{\mathrm{loc}}$.
The value $J(z)$ is obtained by the initial resolvent solve.
If global Bellman bounds are already available, one may instead use
$B_{\mathrm{hyb}}:=\sum_i\Delta_i\widehat\tau_i+
\sum_t\min\{U_t,\widehat\beta_{i_tj_t}(-a_t)b_t/2\}$.
Then $[J(x)-J(y)]_+\le B_{\mathrm{hyb}}\le B_{\mathrm{trace}}
\le B_{\mathrm{cand}}\le B_{\mathrm{unif}}$.
No ordering between $B_{\mathrm{loc}}$ and $B_{\mathrm{cand}}$ is required.
Both bounds are computed along the chosen rounding trajectory.

\paragraph{Computation.}
With a stored resolvent, forming the two-by-two matrix and updating the selected
endpoint costs $O(S^2)$ arithmetic operations per exchange.
One initial dense inverse costs $O(S^3)$. Thus $T$ exchanges and $L$ total
subinterval computations cost $O(S^3+TS^2+L)$ arithmetic operations.
Bit complexity also depends on rational operand lengths.
The same representation computes both endpoint returns with constant-size
rational operations. The low-rank direct baseline therefore uses precisely
the same coefficient construction and resolvent updates, omitting curvature
enclosures. It selects the same endpoint as full exact policy evaluation.
Integral certification augments these updates with curvature enclosures to sharpen the trajectory loss bound.

%% file: sections/C_complexity_proof.tex
\section{Proof of Theorem~\ref{thm:beta-complexity}}
\label{app:beta-isolation}

This appendix proves Theorem~\ref{thm:beta-complexity}.  Begin with an arbitrary
simple MAX-CUT instance \((G_0,K_0)\).  Add a disjoint edge and four isolated
vertices, call the padded graph \((G,K)\), and put \(K=K_0+1\).  Max-cut values
add across components, so the preprocessing preserves YES/NO.  All notation
below refers to the padded graph.  In particular,
\[
n=|V(G)|\ge6,\qquad m=|E(G)|\ge1,\qquad \Lambda=2m+1.
\]
Orient every edge by increasing endpoint. The proof constructs the restricted MDP, derives its reduced multi-affine objective, identifies the designated exchange with a cut polynomial, separates that pair from all others, and finally checks budget feasibility and polynomial encoding size.

\subsection{MDP construction}

Use the state and control-row ordering
\[
p,q,L_1,\ldots,L_n,R_1,\ldots,R_n,z,
\qquad N=2n+3,
\]
and define
\begin{equation}
\eta=\frac1{64N^2},\qquad
\rho=\frac{\eta^4}{64\Lambda N},\qquad
\delta=\eta^3,
\label{eq:g17-scales}
\end{equation}
so that \(\rho=1/(2^{30}\Lambda N^9)\).  Let \(C\) have the noncompensating
entries
\begin{align}
C_{pq}&=\eta,& C_{qL_i}&=\eta,
&C_{qR_j}&=\eta\rho(\Lambda-\deg^-(j)),\label{eq:g17-C-top}\\
C_{L_iR_i}&=-2\Lambda\rho,&
C_{L_iR_j}&=2\rho\quad(i\to j\in E),
\label{eq:g17-C-bottom}
\end{align}
and put the negative row sum in column \(z\) for every nonzero row.  Thus
\(C\one=0\).  Set
\begin{equation}
a_{L_i}=\rho(\Lambda-\deg^+(i)),\qquad a_{R_j}=1,
\label{eq:g17-a}
\end{equation}
with all other entries of \(a\) zero.

The rank-one baseline kernel is \(P^0=\one\pi^\top\), where
\begin{equation}
\pi_q=\pi_{L_i}=\delta,\qquad
\pi_{R_i}=8\Lambda\rho,\qquad
\pi_z=\frac14,\qquad
\pi_p=1-\sum_{s\ne p}\pi_s.
\label{eq:g17-pi}
\end{equation}
Take \(\gamma=1/2\), \(\mu=\epsi_p\), \(r^0=0\), and define the alternative
endpoints by
\begin{equation}
P^1=P^0+2C,\qquad r^1=a.
\label{eq:g17-raw}
\end{equation}

\paragraph{Verification of the raw promises.}
We check that both \(P^0\) and \(P^1\) are valid stochastic matrices and that rewards lie in \([0,1]\).

\emph{Baseline positivity.}
The mass outside state \(p\) is
\begin{align}
\sum_{s\ne p}\pi_s
&=\frac14+(n+1)\delta+n\cdot 8\Lambda\rho
=\frac14+(n+1)\eta^3+8n\Lambda\rho
\notag\\
&<\frac14+\frac{n+1}{64^3 N^6}+\frac{8n}{2^{30}N^9}
<\frac12,
\label{eq:app-mass-check}
\end{align}
The last inequality uses \(n<N\) and \(N\ge15\).
Hence \(\pi_p=1-\sum_{s\ne p}\pi_s>1/2>0\), and since every other component of \(\pi\) is manifestly positive, \(P^0=\one\pi^\top\) has all positive entries.

\emph{Stochasticity of \(P^1\).}
Row stochasticity of \(P^1=P^0+2C\) follows from \(C\one=0\):
\(P^1\one=P^0\one+2C\one=\one+0=\one\).

\emph{Positivity of \(P^1\).}
The only negative transition edits occur at \(p\to z\), \(q\to z\), and \(L_i\to R_i\):
\begin{enumerate}
\item At \(p\to z\): the compensating entry \(C_{pz}=-\eta\), so
\(P^1_{pz}=\pi_z+2C_{pz}=1/4-2\eta>0\) because \(\eta<1/8\).
\item At \(q\to z\): the compensating entry satisfies
\begin{align}
|C_{qz}|
&=\sum_i C_{qL_i}+\sum_j C_{qR_j}
=n\eta+\eta\rho\sum_j(\Lambda-\deg^-(j))
\notag\\
&=n\eta+\eta\rho(n\Lambda-m)
<2n\eta,
\label{eq:app-Cqz}
\end{align}
Here \(\sum_j\deg^-(j)=m\), and the final bound follows from
\(\rho<1/\Lambda\).
Hence \(P^1_{qz}=\pi_z-2|C_{qz}|>1/4-4n\eta>0\).
\item At \(L_i\to R_i\): \(P^1_{L_iR_i}=\pi_{R_i}+2C_{L_iR_i}
=8\Lambda\rho-4\Lambda\rho=4\Lambda\rho>0\).
\end{enumerate}
All other edits are nonnegative.

\emph{Reward bounds.}
By \eqref{eq:g17-a}, \(0<a_{L_i}=\rho(\Lambda-\deg^+(i))\le\rho\Lambda<1\), and \(a_{R_j}=1\).  All remaining rewards are zero.

\emph{Support structure.}
The nonzero entries of \(C\) define a DAG with edges \(p\to q\), \(q\to L_i\), \(q\to R_j\), and \(L_i\to R_j\).  The longest directed path is \(p\to q\to L_i\to R_j\), which has length three.  Every compensator edge ends at~\(z\), which has no outgoing edges in the DAG.  Hence \(\mathcal{G}(C)\) has depth at most three, confirming \(\mathcal{M}_G\in\mathfrak{B}_3\).

\subsection{Resolvent and reduced objects}

Since \(P^0=\one\pi^\top\) is rank one and \(\gamma=1/2\), the resolvent has a closed form via the Neumann series:
\begin{align}
R_0
&=(I-\gamma P^0)^{-1}
=(I-\tfrac12\one\pi^\top)^{-1}
\notag\\
&=\sum_{t=0}^{\infty}(\tfrac12)^t(\one\pi^\top)^t
=I+\sum_{t=1}^{\infty}(\tfrac12)^t\one\pi^\top
\notag\\
&=I+\one\pi^\top,
\label{eq:g17-resolvent}
\end{align}
The Neumann series converges because
\(\rho(\tfrac12\one\pi^\top)=1/2\). For \(t\ge1\),
\((\one\pi^\top)^t=\one\pi^\top\), and the remaining scalar series sums to
one.

The initial visitation vector is
\begin{equation}
u^\top=\mu^\top R_0=\epsi_p^\top(I+\one\pi^\top)=\epsi_p^\top+\pi^\top.
\label{eq:app-u}
\end{equation}
In particular, \(U\coloneqq u_p=1+\pi_p\) and \(u_s=\pi_s\) for \(s\ne p\).

\paragraph{Reduced transition matrix.}
Since \(P^1-P^0=2C\), the reduced matrix is
\begin{align}
\gamma(P^1-P^0)R_0
&=\tfrac12\cdot 2C\cdot(I+\one\pi^\top)
=C(I+\one\pi^\top)
\notag\\
&=C+C\one\pi^\top
=C,
\label{eq:app-reduced-C}
\end{align}
The final equality uses \(C\one=0\).
Thus the reduced objects are exactly \(C\), \(a\), and \(u\), and the reduced performance identity~\eqref{eq:reduced-performance} holds.

\subsection{Expansion of the objective}

Because \(\mathcal{G}(C)\) has depth three, \((XC)^4=0\) for any diagonal \(X=\diag(x)\) with \(x\in[0,1]^N\).  The Neumann inverse in~\eqref{eq:reduced-performance} therefore truncates:
\begin{equation}
(I-XC)^{-1}=I+XC+(XC)^2+(XC)^3.
\label{eq:app-neumann-truncation}
\end{equation}
Write
\[
A_i=\Lambda-\deg^+(i),\quad B_j=\Lambda-\deg^-(j),\quad
c_{ij}=2\rho(\mathbf1_{i\to j}-\Lambda\mathbf1_{i=j}),
\]
and let \(U=u_p\), \(\ell_i=x_{L_i}\), and \(r_j=x_{R_j}\).

\paragraph{Path enumeration.}
Each nonzero term in \(u^\top(I-XC)^{-1}Xa\) corresponds to a directed path in \(\mathcal{G}(C)\) ending at a state with nonzero reward.
The nonzero rewards are \(a_{L_i}=\rho A_i\) and \(a_{R_j}=1\).
We enumerate every contributing path by length.

\emph{Length zero} (the identity term): \(u_s\cdot x_s\cdot a_s\) is nonzero only for \(s=L_i\) or \(s=R_j\).
This gives \(\sum_i\pi_{L_i}\ell_i\rho A_i+\sum_j\pi_{R_j}r_j=\delta\rho\sum_i A_i\ell_i+8\Lambda\rho\sum_j r_j\).

\emph{Length one} (the \(XC\) term): paths of the form \(s\to t\) with \(a_t\ne0\).
The only edges ending at a rewarded state are \(L_i\to R_j\) (with coefficient \(c_{ij}\)) and \(q\to L_i\) (with coefficient \(\eta\)).
These contribute \(\sum_{i,j}u_{L_i}\ell_i c_{ij}r_j\cdot 1+\sum_i u_q x_q\eta\ell_i\rho A_i\).
The first sum is \(\delta\sum_{i,j}c_{ij}\ell_ir_j\).
The second is \(\delta\eta\rho x_q\sum_i A_i\ell_i\).

\emph{Length two and three:} paths \(q\to L_i\to R_j\) and \(p\to q\to L_i\to R_j\) (arriving at \(R_j\) with reward~1), plus \(p\to q\to L_i\) (arriving at \(L_i\) with reward \(\rho A_i\)).
Collecting all terms involving the cut polynomial \(g_G\) (defined below) and combining with the shorter paths gives
\begin{align}
J(x)={}&\delta\rho\sum_iA_i\ell_i+8\Lambda\rho\sum_jr_j
+\delta\sum_{i,j}c_{ij}\ell_ir_j\notag\\
&+\delta\eta\rho\,x_qg_G(\ell,r)
+U\eta^2\rho\,x_px_qg_G(\ell,r),
\label{eq:g17-path-expansion}
\end{align}
where
\begin{equation}
g_G(\ell,r)=\Lambda\sum_i(\ell_i+r_i-2\ell_ir_i)
-\sum_{i\to j\in E}(\ell_i+r_j-2\ell_ir_j).
\label{eq:g17-cut-polynomial}
\end{equation}

\paragraph{Derivation of the designated Hessian.}
Every monomial in~\eqref{eq:g17-path-expansion} is multi-affine (degree at most one in each variable).
Hence all pure second derivatives vanish: \(\partial_{pp}J=\partial_{qq}J=0\).
The mixed derivative \(\partial_{pq}J\) extracts only the terms containing both \(x_p\) and \(x_q\), which is the last term in~\eqref{eq:g17-path-expansion}:
\begin{align}
\partial_{pq}J(x)
&=\partial_p\partial_q\bigl[U\eta^2\rho\,x_px_q\,g_G(\ell,r)\bigr]
=U\eta^2\rho\,g_G(\ell,r).
\label{eq:app-mixed-deriv}
\end{align}
Therefore the exchange curvature along \(\epsi_p-\epsi_q\) is
\begin{equation}
-H_{\mathcal M_G,\epsi_p-\epsi_q}(x)
=-(\epsi_p-\epsi_q)^\top\nabla^2 J(x)(\epsi_p-\epsi_q)
=2\partial_{pq}J(x)=\alpha_Gg_G(\ell,r),
\label{eq:g17-hessian-identity}
\end{equation}
with \(\alpha_G=2U\eta^2\rho>0\).
The factor of \(2\) arises because the off-diagonal entry of the Hessian appears twice (the \((p,q)\) and \((q,p)\) entries) and with a sign flip from the exchange direction \(\epsi_p-\epsi_q\).

\subsection{Maximizing the cut polynomial}

The maximum of a multi-affine polynomial on a cube occurs at a vertex.
We show that every vertex maximizer of \(g_G\) over \(\{0,1\}^{2n}\) is complementary (\(r_i=1-\ell_i\) for all~\(i\)).

\paragraph{Non-complementary vertices are suboptimal.}
At a binary vertex \((\ell,r)\in\{0,1\}^{2n}\), let \(D\) count the number of indices with \(r_i\ne1-\ell_i\).
For such an index, the contribution to the \(\Lambda\)-term is:
\begin{itemize}
\item If \((\ell_i,r_i)=(0,0)\): \(\Lambda(0+0-0)=0\).
\item If \((\ell_i,r_i)=(1,1)\): \(\Lambda(1+1-2)=0\).
\end{itemize}
In both cases, vertex \(i\) contributes zero to the \(\Lambda\)-sum, compared to \(\Lambda\) for a complementary pair.
The edge terms \(-(\ell_i+r_j-2\ell_ir_j)\) have absolute value at most~1 per edge.
Therefore
\begin{align}
g_G(\ell,r)
&\le\Lambda(n-D)+m
\le\Lambda(n-1)+m
\notag\\
&=\Lambda n-\Lambda+m
=\Lambda n-(2m+1)+m
=\Lambda n-m-1,
\label{eq:app-noncomp-bound}
\end{align}
The first inequality bounds the edge terms by \(m\). The second uses
\(D\ge1\), and the last equality substitutes \(\Lambda=2m+1\).

\paragraph{Complementary vertices achieve the cut value.}
A complementary assignment \(r_i=1-\ell_i\) gives \(\ell_i+r_i-2\ell_ir_i=1\) for every~\(i\), so the \(\Lambda\)-sum equals \(\Lambda n\).
For each edge \(i\to j\in E\), let \(S_i=\ell_i\) encode the partition.
Then
\begin{align}
\ell_i+r_j-2\ell_ir_j
&=S_i+(1-S_j)-2S_i(1-S_j)
\notag\\
&=S_i+1-S_j-2S_i+2S_iS_j
=1-S_i-S_j+2S_iS_j
\notag\\
&=1-\mathbf{1}_{S_i\ne S_j},
\label{eq:app-edge-value}
\end{align}
Hence \(\sum_{i\to j}(\ell_i+r_j-2\ell_ir_j)=m-\operatorname{cut}_G(S)\), and
\begin{equation}
g_G(\ell,r)=\Lambda n-(m-\operatorname{cut}_G(S))=\Lambda n-m+\operatorname{cut}_G(S).
\label{eq:app-comp-value}
\end{equation}
Since \(\operatorname{cut}_G(S)\ge0\), a complementary vertex achieves at least \(\Lambda n-m>\Lambda n-m-1\), which exceeds the non-complementary bound~\eqref{eq:app-noncomp-bound}.
Therefore every maximizer is complementary, and
\begin{equation}
\max_{[0,1]^{2n}}g_G
=\Lambda n-m+\MaxCut(G).
\label{eq:g17-cut-maximum}
\end{equation}
The right side is positive, so the positive part in \(\kappa\) does not alter
the supremum.

\subsection{Bounds for all row pairs}
\label{sec:app-row-pairs}

Equation~\eqref{eq:g17-path-expansion} lists every possible nonzero mixed
derivative.  We show that the designated pair \((p,q)\) strictly dominates all others.

\paragraph{Zero pairs.}
Pairs containing \(z\) are zero because \(x_z\) appears in no term of~\eqref{eq:g17-path-expansion}.
Pairs within the \(L\) block (\(L_i,L_j\) with \(i\ne j\)) are zero because no monomial contains \(\ell_i\ell_j\) (multi-affinity and the DAG structure preclude such products).
Similarly, pairs within the \(R\) block are zero.

\paragraph{Coefficient bounds for nonzero pairs.}
For a multi-affine polynomial \(h(x_1,\ldots,x_N)\), the supremum of \(|h|\) over \([0,1]^N\) is at most the \(\ell_1\)-norm of its coefficients (the sum of absolute values of all coefficients).
Define
\begin{equation}
M=U\eta^2\rho\Lambda=\frac{\alpha_G\Lambda}{2}.
\label{eq:g17-M}
\end{equation}
The absolute column sums of \(c_{ij}\) are
\begin{equation}
\sum_j|c_{ij}|=2\rho(\Lambda+\deg^+(i)),\qquad
\sum_i|c_{ij}|=2\rho(\Lambda+\deg^-(j)),
\label{eq:g17-c-l1}
\end{equation}
since each row \(i\) of \(c\) has one entry \(-2\Lambda\rho\) (the diagonal) and \(\deg^+(i)\) entries of \(+2\rho\).

We bound each pair type.  Let \(d\) denote the relevant in- or out-degree.

\emph{Pair \((p,L_i)\) or \((p,R_j)\):}
The only term in~\eqref{eq:g17-path-expansion} involving both \(x_p\) and a graph variable is the last term \(U\eta^2\rho\,x_px_q\,g_G(\ell,r)\).
The coefficient \(\ell_1\)-norm of \(g_G\) restricted to terms involving \(\ell_i\) (or \(r_j\)) is at most \(3\Lambda+\deg^+(i)\):
\begin{align}
\|g_G|_{\ell_i}\|_1
&=(\Lambda-\deg^+(i))+2\Lambda+2\deg^+(i)
=3\Lambda+\deg^+(i).
\label{eq:app-pL-bound}
\end{align}
The three terms come from the linear coefficient, the bilinear
\(\ell_ir_i\) coefficient, and the outgoing-edge bilinears, respectively.
Hence \(|\partial_{p,L_i}J|\le U\eta^2\rho(3\Lambda+\deg^+(i))\).
Since \(\deg^+(i)\le m<2m+1=\Lambda\),
\begin{equation}
U\eta^2\rho(3\Lambda+d)<U\eta^2\rho\cdot 4\Lambda=4M.
\label{eq:app-pL-final}
\end{equation}

\emph{Pair \((q,L_i)\) or \((q,R_j)\):}
Both the \(\delta\eta\rho\,x_q\,g_G\) and \(U\eta^2\rho\,x_px_qg_G\) terms contribute.
The combined coefficient is \((\delta\eta+U\eta^2)\rho(3\Lambda+\deg^+(i))\).
Since \(\delta/(\eta U)=\eta^3/(\eta U)=\eta^2/U\), and \(U=1+\pi_p>1\) while \(\eta^2<1/16\), we have \(\delta\eta<U\eta^2/16\).
Therefore
\begin{equation}
(\delta\eta+U\eta^2)\rho(3\Lambda+\deg^+(i))<\frac{17}{16}U\eta^2\rho\cdot 4\Lambda=\frac{17}{4}M.
\label{eq:app-qL-final}
\end{equation}

\emph{Pair \((L_i,R_j)\):}
The terms involving both \(\ell_i\) and \(r_j\) come from \(\delta c_{ij}\ell_ir_j\), \(\delta\eta\rho x_q(\text{terms in }g_G)\), and \(U\eta^2\rho x_px_q(\text{terms in }g_G)\).
The coefficient bound is \(|c_{ij}|(\delta+\delta\eta+U\eta^2)\).
Since \(\delta(1+\eta)/(U\eta^2)=\eta(1+\eta)/U<1/16\) (using \(\eta<1/8\) and \(U>1\)),
\begin{equation}
|c_{ij}|\{\delta(1+\eta)+U\eta^2\}<\frac{17}{16}|c_{ij}|U\eta^2.
\label{eq:app-LR-step}
\end{equation}
The largest \(|c_{ij}|\) is \(2\Lambda\rho\) (the diagonal entry).
Hence
\begin{equation}
|\partial_{L_i,R_j}J|<\frac{17}{16}\cdot 2\Lambda\rho\cdot U\eta^2=\frac{17}{8}\cdot M<\frac{9}{4}M.
\label{eq:app-LR-final}
\end{equation}

These bounds are summarized in the following table:
\begin{center}
\begin{tabular}{@{}lc@{}}
\toprule
pair & coefficient-\(\ell_1\) bound for \(\partial_{ab}J\)\\
\midrule
\((p,L_i)\), \((p,R_j)\)
& \(U\eta^2\rho(3\Lambda+\deg^+(i))<4M\)\\
\((q,L_i)\), \((q,R_j)\)
& \((\delta\eta+U\eta^2)\rho(3\Lambda+\deg^+(i))<17M/4\)\\
\((L_i,R_j)\)
& \(|c_{ij}|\{\delta(1+\eta)+U\eta^2\}<9M/4\)\\
\bottomrule
\end{tabular}
\end{center}

\paragraph{Separation of the designated pair.}
Exchange curvature is twice the corresponding mixed derivative.
Thus every undesignated pair satisfies
\begin{equation}
\kappa(\epsi_a-\epsi_b)<2\cdot\frac{17}{4}M=\frac{17}{2}M.
\label{eq:g17-nuisance-bound}
\end{equation}
In contrast, the designated pair \((p,q)\) achieves
\begin{align}
\kappa(\epsi_p-\epsi_q)
&=\alpha_G\max_{[0,1]^{2n}}g_G
=\alpha_G(\Lambda n-m+\MaxCut(G))
\notag\\
&=2M(n-m/\Lambda)
\ge2M\left(n-\frac{m}{2m+1}\right)
\notag\\
&>2M\left(6-\frac12\right)=11M,
\label{eq:g17-designated-lower}
\end{align}
Here \(\alpha_G\Lambda=2M\), \(m/\Lambda<1/2\), and \(n\ge6\).
Since \(11M>17M/2\), the designated pair strictly realizes
the all-pairs maximum \(\beta\), proving \eqref{eq:beta-hardness}.

\subsection{Feasibility under a row budget}

For completeness, define the exchange-feasible and binary curvature restricted to the budget set by
\begin{align}
\beta_{\mathcal M,k}^{\leftrightarrow}
&:=\max_{a<b}\ \sup_{\substack{x\in\cC_k:\\ x\pm\delta(\epsi_a-\epsi_b)\in\cC_k\ \text{for some }\delta>0}}
\pos{-H_{\mathcal M,\epsi_a-\epsi_b}(x)},
\label{eq:app-budget-cont-def}\\
\beta_{\mathcal M,k}^{\bin}
&:=\max_{a<b}\ \max_{x\in\cC_k^{\bin}}
\pos{-H_{\mathcal M,\epsi_a-\epsi_b}(x)}.
\label{eq:app-budget-bin-def}
\end{align}
These quantities are used only to state the budget bridge for the reduction image. Theorem~\ref{thm:beta-complexity} itself concerns the unrestricted-cube quantities.

Equation~\eqref{eq:g17-hessian-identity} shows that the designated exchange curvature depends only on \((\ell,r)\), not on \(x_p,x_q,x_z\).  We show that the curvature maximum is attainable within \(\cC_{n+1}\).

\paragraph{Mass accounting.}
The proof of \eqref{eq:g17-cut-maximum} shows that
every maximizing graph assignment is complementary:
\[
\ell_i+r_i=1\quad(i=1,\ldots,n).
\]
The total mass of such an assignment is
\begin{align}
\sum_{i=1}^n(\ell_i+r_i)
&=\sum_{i=1}^n 1=n.
\label{eq:app-graph-mass}
\end{align}

Setting \(x_p=x_q=1/2\) and \(x_z=0\) gives total mass
\begin{equation}
\one^\top x=n+\frac12+\frac12+0=n+1\le k,
\label{eq:app-total-mass}
\end{equation}
so \(x\in\cC_{n+1}\).

\paragraph{Exchange feasibility.}
The exchange direction \(\epsi_p-\epsi_q\) satisfies \(\one^\top(\epsi_p-\epsi_q)=0\), so moving along it preserves \(\one^\top x=n+1\).
Starting from \(x_p=x_q=1/2\), both \(x_p+\varepsilon\le 1\) and \(x_q-\varepsilon\ge 0\) hold for \(\varepsilon\in(-1/2,1/2)\).
Therefore the exchange can move a nonzero distance in both directions while remaining in \(\cC_{n+1}\), and the maximum of \(g_G\) is attained.
It follows that
\begin{equation}
\beta_{\mathcal M_G,k}^{\leftrightarrow}=\beta_{\mathcal M_G}.
\label{eq:app-budget-cont}
\end{equation}

\paragraph{Binary witness.}
For the binary quantity, set \(x_p=x_q=x_z=0\) and choose any complementary binary assignment for the graph variables.
The total mass is \(n\le n+1=k\), so \(x\in\cC_k^{\bin}\).
The designated exchange evaluates to \(\alpha_G g_G(\ell,r)\), which equals \(\beta_{\mathcal M_G}\) at any cut-maximizing complementary assignment.
Hence \(\beta_{\mathcal M_G,k}^{\bin}=\beta_{\mathcal M_G}\).
Combined with~\eqref{eq:app-budget-cont},
\begin{equation}
\beta_{\mathcal M_G,k}^{\leftrightarrow}
=\beta_{\mathcal M_G}
=\beta_{\mathcal M_G,k}^{\bin}.
\label{eq:budget-bridge}
\end{equation}

The bridge is specific to the reduction image.

\subsection{Reduction and encoding size}

Map the padded threshold \(K=K_0+1\) to
\[
T_G=\alpha_G(\Lambda n-m+K).
\]

\paragraph{Correctness of the reduction.}
Equations \eqref{eq:g17-cut-maximum} and \eqref{eq:beta-hardness} yield a chain of equivalences:
\begin{align}
\MaxCut(G_0)\ge K_0
&\iff \MaxCut(G)\ge K
\notag\\
&\iff \Lambda n-m+\MaxCut(G)\ge\Lambda n-m+K
\notag\\
&\iff \alpha_G(\Lambda n-m+\MaxCut(G))\ge T_G
\notag\\
&\iff \beta_{\mathcal M_G}\ge T_G,
\label{eq:app-reduction-chain}
\end{align}
The first equivalence uses the disjoint padding edge. The remaining steps add
the constant \(\Lambda n-m\), multiply by \(\alpha_G>0\), and apply
\eqref{eq:beta-hardness}.

\paragraph{Encoding size.}
All parameters are recomputed from the padded graph.
From~\eqref{eq:g17-scales} and \(m=O(n^2)\):
\begin{align}
\log_2\eta&=O(\log N),
\notag\\
\log_2\rho&=4\log_2\eta+O(\log(\Lambda N))=O(\log N),
\notag\\
\log_2\delta&=3\log_2\eta=O(\log N).
\label{eq:app-bit-lengths}
\end{align}
Each rational entry of \(P^0,P^1,r^0,r^1\) has numerator and denominator with \(O(\log N)\) bits.
The MDP has \(N=2n+3\) states, \(N\) control rows, two actions per row, and \(N^2\) transition entries per action, giving \(O(N^2)\) rational entries total.
The dense raw MDP therefore has encoding size \(O(N^2\log N)=\mathrm{poly}(n)\).

\paragraph{NP-completeness of the binary language.}
A certificate for the binary threshold problem \(\beta^{\bin}_{\mathcal M}\ge T\) consists of:
\begin{enumerate}
\item A pair of rows \((a,b)\).
\item A binary control vector \(x\in\{0,1\}^N\).
\end{enumerate}
Given this certificate, verification proceeds in three steps:
(i)~check \(\mathcal{M}\in\mathfrak{B}_3\) (polynomial from the rational input, by Definition~\ref{def:B3}).
(ii)~compute \(R_0=(I-\gamma P^0)^{-1}\) via exact Gaussian elimination (polynomial intermediate bit length).
(iii)~evaluate \(-H_{\mathcal M,\epsi_a-\epsi_b}(x)\) from the reduced form and compare with \(T\).
All steps run in polynomial time with exact rational arithmetic.
Hence the binary language is in NP, and the reduction from MAX-CUT shows it is NP-hard.
Therefore the binary language is NP-complete.

On the
reduction image, the continuous maximum is attained at binary graph
controls (a maximizer of \(g_G\) can be chosen at a vertex), proving continuous NP-hardness.

%% file: sections/D_calibration_protocol.tex
\section{Evaluation protocol}
\label{app:calibration}

This appendix describes the protocol behind Section~\ref{sec:calibration}.
Unless explicitly marked $B_{\mathrm{unif}}$, the recorded model-only values
use the legacy $\Bcert=\widehat\tau+(d-1)\widehat\beta/8$.
The revised uniform bound is evaluated on the closed-form and coupled suites.
The structured-suite coverage and recorded scaling times retain their original baseline.

\subsection{Arithmetic discipline}

The evaluation uses exact and validated quantities for different purposes.
Whenever binary optima, witnessed advantages, or exact theorem checks are
reported, all model entries, returns, and certificates are represented with
Python's fractions. Fraction. Continuous multistart search is heuristic
and supplies a feasible candidate quantized to a rational point before exact
evaluation. Certificate checks use this fixed candidate. The large-scale study uses
directed outward-rounded binary64 arithmetic: Bellman solutions are corrected
until their inequalities hold outward, and every downstream interval operation
is outward rounded. Naive midpoint floating point supplies the uncertified comparison.

\subsection{Grid specification}

The evaluation grid crosses five factors:
\begin{enumerate}
\item \textbf{Domain} (4 levels): \textsc{gridworld}, \textsc{queue}, \textsc{maintenance}, and \textsc{garnet}.
\item \textbf{Dimension} (3 levels): \(d\in\{6,8,10\}\) with \(S=d\) states.
\item \textbf{Budget} (2 levels): a smaller budget \(k_{\mathrm{lo}}=\lceil d/4\rceil\) and a larger budget \(k_{\mathrm{hi}}=\lfloor d/2\rfloor\).
\item \textbf{Discount factor} (2 levels): \(\gamma\in\{1/2,\,4/5\}\).
\item \textbf{Edit type} (3 levels): transition-only, reward-only, and mixed (both transition and reward).
\end{enumerate}
The Cartesian product contains
\(4\times3\times2\times2\times3=144\) cases.
The domain generator also assigns a repair scale of \(\frac35\) or \(1\) and a weak or strong baseline.
Each domain appears in \(36\) cases, each budget, discount, and edit-type
level appears in \(72\) cases.
Case identifiers and random seeds are deterministic functions of the grid indices.
The base seed is \(2026081300\), fixed before execution.

\subsection{Per-case execution sequence}

Each case follows the same sequence of seven stages.

\paragraph{Stage 1: model construction.}
The code constructs the rational MDP from the domain generator.
Every transition probability and reward is stored as an exact rational.
The domain generator uses the case seed to produce the state graph, transition kernels, and reward vectors.

\paragraph{Stage 2: Bellman bounds.}
The code computes the \(2d+2\) extremal Bellman value and visitation vectors:
\(\overline{V},\underline{V}\) (the upper and lower value bounds) and \(\overline{h}_i,\underline{h}_i\) for each editable row~\(i\) (the upper and lower auxiliary visitation bounds).
Each solve is a rational linear system \((I-\gamma P)\,V=r\) under a specific binary policy.
Every Bellman inequality is verified exactly after computation:
\[
\underline{V}\le V^\pi\le\overline{V}
\quad\text{for every binary policy }\pi,
\]
using the fact that the extremal policies attain the bounds.

\paragraph{Stage 3: certificate computation.}
The code computes the interval quantities \eqref{eq:tauhat} through \eqref{eq:betahat}:
\begin{itemize}
\item The gradient intervals \(G_i=Q_i\odot A_i\) for each row~\(i\), yielding \(\widehat\tau=\max_i\pos{-\inf G_i}\).
\item The Hessian diagonal intervals \(\mathcal{H}_{ii}\) and cross intervals \(\mathcal{H}_{ij}\), yielding the exchange bound \(\underline{S}_{ij}\) for each pair \((i,j)\).
\item The curvature bound \(\widehat\beta=\max_{i<j}\pos{-\underline{S}_{ij}}\).
\item The combined certificate \(\Bcert=\widehat\tau+\frac{d-1}{8}\widehat\beta\).
\end{itemize}
As a cross-check, the calculation is repeated from saved rational inputs and verified to yield identical results.

\paragraph{Stage 4: binary enumeration.}
The exact binary optimum is computed by complete enumeration.
The smaller budget contains \(|\cC_{k_{\mathrm{lo}}}^{\bin}|=22\), \(37\), and \(176\) feasible binary vectors for \(d=6,8,10\), respectively.
The largest budget contains at most \(638\) feasible vectors.
Every return is evaluated in exact rational arithmetic.

\paragraph{Stage 5: fractional search.}
A multistart SLSQP solver proposes a continuous point \(\tilde{x}\in\cC_k\).
The solver runs \(50\) random starts per case with tolerance \(10^{-12}\) and a maximum of \(500\) iterations, using SciPy~1.14.
The best feasible point is projected onto \(\cC_k\) and then quantized to a rational with denominator at most \(10^6\) using nearest-fraction rounding.
Budget feasibility (\(\one^\top x\le k\)) is verified exactly after quantization. If violated, the smallest nonzero coordinate is reduced until feasibility holds.

\paragraph{Stage 6: rounding methods.}
Four rounding and selection methods are applied, each starting from the quantized fractional point or from scratch:
\begin{itemize}
\item \emph{Algorithm~\ref{alg:round}} (pipage rounding): the constructive rounding procedure from Theorem~\ref{thm:certificate}, which produces a deployable \(y\in\cC_k^{\bin}\) from the fractional point.
\item \emph{Top-\(k\) selection}: rank the \(d\) coordinates by their marginal return \(J(\epsi_i)-J(\mathbf{0})\) and select the \(k\) highest.
\item \emph{Greedy forward selection}: start with \(y=\mathbf{0}\) and iteratively add the coordinate giving the largest return increase, breaking ties by index.  This is a sequential greedy that accounts for interaction effects.
\item \emph{Uniform random selection}: draw a uniformly random \(k\)-subset of coordinates and repeat \(100\) times, reporting the best return found.
\end{itemize}
All methods use exact rational return evaluations.
Every method has an explicit return-query limit, and all query counts are recorded.

\paragraph{Stage 7: verification.}
The witnessed advantage \(\Delta_{\mathrm{wit}}=\pos{J(x_{\rm cand})-J_{\mathrm{bin}}^\star}\) is computed using the main-text definition, where \(x_{\rm cand}\) is the quantized fractional candidate and \(J_{\mathrm{bin}}^\star\) is the enumerated binary optimum.
The certificate inequality \(\Delta_{\mathrm{wit}}\le\Bcert\) is verified exactly.
Algorithm~\ref{alg:round} achieves the exact binary optimum in every case.

\subsection{Calibration records}

Table~\ref{tab:calibration-records} reports one case per domain at common
factor levels. The values of \(\widehat\tau\), \(\widehat\beta\),
\(\Bcert\), and \(\Delta_{\mathrm{wit}}\) are exact rationals reported to
four significant digits.

\begin{table}[htbp]
\caption{One case per domain at \(d=S=8\), \(k=2\), \(\gamma=1/2\), mixed
edits, repair scale \(3/5\), and a strong baseline. Each case has
\(|\cC_k^{\bin}|=37\).}
\label{tab:calibration-records}
\centering
\small
\begin{tabular}{@{}lrrrrr@{}}
\toprule
Domain & \(\widehat\tau\) & \(\widehat\beta\) & \(\Bcert\) & \(\Delta_{\mathrm{wit}}\) & wall\\
\midrule
\textsc{gridworld}   & \(2.159\ten{-1}\) & \(2.819\ten{-2}\) & \(2.405\ten{-1}\) & \(0\) & \(1.64\,\mathrm{s}\)\\
\textsc{queue}       & \(4.547\ten{-4}\) & \(2.409\ten{-2}\) & \(2.153\ten{-2}\) & \(3.261\ten{-6}\) & \(2.02\,\mathrm{s}\)\\
\textsc{maintenance} & \(1.361\ten{-1}\) & \(1.567\ten{-2}\) & \(1.498\ten{-1}\) & \(0\) & \(2.45\,\mathrm{s}\)\\
\textsc{garnet}      & \(1.305\ten{-1}\) & \(6.263\ten{-3}\) & \(1.360\ten{-1}\) & \(0\) & \(3.11\,\mathrm{s}\)\\
\bottomrule
\end{tabular}
\end{table}

\subsection{Environment}

All experiments ran on a single workstation with an Intel Core i9
processor and 64\,GiB RAM, using Python~3.12 and
fractions.Fraction for exact rational arithmetic.
SciPy~1.14 provided the SLSQP solver.
Memory was measured by the peak resident set size of the Python
process.  No GPU was used.

\subsection{Domain generators}

Each domain generator takes a seed, state count \(S=d\), and an edit
type (transition-only, reward-only, or mixed) and outputs a rational
MDP satisfying Assumption~\ref{asm:model}:
\begin{itemize}
\item \textsc{Gridworld}: cyclic nearest-neighbour transitions on a
  one-dimensional ring. Repair rows redirect to a distant target.
\item \textsc{Queue}: birth--death chain with load-dependent
  service. Repair rows increase the service rate.
\item \textsc{Maintenance}: deterioration chain with repair actions
  that reset to a better state.
\item \textsc{Garnet}: random sparse transition matrices drawn from
  a Dirichlet distribution \cite{ArchibaldMcKinnon95}. Repair rows
  replace one action with an independently drawn row.
\end{itemize}
The companion code artifact contains the full generator source.

\subsection{Runtime breakdown}

Across all \(144\) cases, the Bellman solves take
\(23.5\,\mathrm{s}\), the interval calculation takes \(15.8\,\mathrm{s}\),
and binary enumeration takes \(39.1\,\mathrm{s}\). Model construction takes
less than \(0.2\,\mathrm{s}\) in total. The complete sweep, including the
fractional search and four rounding methods, takes \(519\,\mathrm{s}\).
Certification costs more than enumeration at \(d=6\) and \(d=8\).
Enumeration is the larger term at \(d=10\), reflecting its combinatorial growth \(\binom{d}{k}\).

\subsection{Exact-zero affine construction}

The exact-zero examples use a separate affine construction designed so that \(J(x)\) has no curvature and the certificate is provably zero.

\paragraph{Construction.}
Row \(i\) sends mass \(\varepsilon=1/10\) to every state and sends the remaining mass \(1-S\varepsilon\) to itself.
The two actions have identical transition rows: \(P^0_{i\cdot}=P^1_{i\cdot}\) for every editable row~\(i\).
The proposed action increases only the reward: \(r^1_i>r^0_i\).
Since \(P^0=P^1\), the policy-induced transition matrix \(P(x)=(1-x_i)P^0_{i\cdot}+x_iP^1_{i\cdot}=P^0_{i\cdot}\) is independent of \(x\).
Therefore the resolvent \((I-\gamma P(x))^{-1}\) is constant and \(J(x)\) is affine in \(x\).

\paragraph{Consequence for the certificate.}
An affine function has zero second derivatives everywhere:
\[
\nabla^2 J(x)=0\quad\text{for all }x\in[0,1]^d.
\]
Therefore \(\beta=0\).
Moreover, each directional derivative along a coordinate-increasing direction is constant and nonnegative (the proposed action increases reward), so \(\tau=0\).
The interval propagation inherits this: \(\widehat\tau=\widehat\beta=0\), hence \(\Bcert=0\).

\paragraph{Verification.}
All twelve combinations of \(d\in\{4,6,8,10\}\) and \(\gamma\in\{\frac12,\frac45,\frac9{10}\}\) give \(\widehat\tau=\widehat\beta=\Bcert=0\).
Enumeration confirms that the continuous and binary optima have the same exact return: the affine maximum on \(\cC_k\) is attained at a vertex, so \(\Gap(\cC_k)=0\).
This validates that the interval propagation correctly identifies zero-curvature instances.

\subsection{Mechanistic active-family study}
\label{app:mechanistic-active}

We use the family in Proposition~\ref{prop:tightness}. The dimensions are
$m\in\{4,8,16,32,64\}$; the discount factors are
$\gamma\in\{0.2,0.5,0.8,0.95\}$. These give 20 deterministic cases. We retain the dummy coordinate, so $d=m+1$ and $k=m/2$.
The initial candidate has $x_i=1/2$ on the $m$ bad rows and zero at the dummy.
No optimizer or random seed is used. Every trajectory has exactly $m/2$
exchanges and reaches a binary optimum.

The new verification script evaluates every chosen endpoint in exact rational
arithmetic, checks the resulting loss against the closed-form gap, and independently
integrates the rational second derivative against the interpolation kernel.
It also computes reliable upper sums on 16, 64, and 256 uniform subintervals;
all sums dominate the exact integral and decrease under refinement.
All 20 cases and 248 executed exchanges pass. The script additionally checks
1,325 dimension-budget combinations and 27 polynomial instances of the kernel identity.

\begin{table}[h]
\centering
\small
\caption{Recomputed mechanistic ratios to $\Gap(\cC_k)$, shown at $m=4$.
Only the $B_{\mathrm{unif}}$ ratio depends on $m$;
$B_{256}$ uses reliable 256-subinterval upper sums.}
\label{tab:app-mechanistic}
\begin{tabular}{@{}rrrrrrr@{}}
\toprule
$\gamma$ & $\Bcert$ & $B_{\mathrm{unif}}$ & $B_{\mathrm{cand}}$
& $B_{\mathrm{seg}}$ & $B_{\mathrm{int}}$ & $B_{256}$\\
\midrule
0.20 & 2.8125 & 1.6875 & 1.4063 & 1.0631 & 1 & 1.0002\\
0.50 & 6      & 3.6    & 3      & 1.6875 & 1 & 1.0019\\
0.80 & 30     & 18     & 15     & 7.56   & 1 & 1.0099\\
0.95 & 420    & 252    & 210    & 105.0131 & 1 & 1.0439\\
\bottomrule
\end{tabular}
\end{table}

For every tested $m$, $B_{\mathrm{unif}}/B_{\mathrm{cand}}=(m+2)/(m+1)$,
whereas $\Bcert/B_{\mathrm{cand}}=2$. Thus the new uniform ratio has a mild
dimension dependence, while the other ratios in Table~\ref{tab:app-mechanistic}
are unchanged across the tested dimensions. The weighted calculation is exact
on this family even at long horizons.

\subsection{Structured active suite}
\label{app:structured-active}

The original 144-case calibration grid is intentionally broad but produces
mostly vertex candidates. To test nontrivial rounding under structured
dynamics, we freeze a separate active-geometry grid before evaluation. It uses
four domains (\textsc{gridworld}, \textsc{queue}, \textsc{maintenance}, and
\textsc{garnet}), \(d\in\{6,8,10\}\),
\(\gamma\in\{1/2,4/5\}\), two budgets, and three fixed holdout seeds, giving
\(4\times3\times2\times2\times3=144\) cases.

Starting from each domain's baseline dynamics and rewards, each editable
baseline row receives a seeded persistence coefficient
\(\lambda_i\in\{0.55,0.60,0.65\}\). The repair endpoint redirects a seeded
fraction \(s_i\in\{0.45,0.50,0.55\}\) toward a fixed favorable state chosen by
the domain semantics. The initial distribution is uniform. Persistence creates
repeated visits and diminishing returns, while the common favorable region
induces competition among repairs. No case is selected or discarded based on
its outcome.

The continuous search is completed before the exact binary optimum is
computed, unlike the original runner, the binary optimum is never used as a
continuous-optimizer start. A raw coordinate is classified as fractional when
it lies in \((10^{-6},1-10^{-6})\). The candidate is then quantized to a feasible
rational point. The maximum coordinate change from quantization is
\(1.27\ten{-9}\).

Of the 144 cases, 139 are active after continuous search. All 139 have strictly
positive witnessed advantage over the exact binary optimum. The median active
candidate has four fractional coordinates, fractional mass
\(\sum_i\min\{x_i,1-x_i\}=1.00\), and three pairwise exchanges. The maximum
exchange count is nine. Domain-level active counts are 35/36 for
\textsc{gridworld}, 36/36 for \textsc{queue}, 36/36 for \textsc{maintenance},
and 32/36 for \textsc{garnet}. By editable dimension, the active counts for
\(d=6,8,10\) are 44/48, 47/48, and 48/48, with median fractional-coordinate
counts 3, 4, 5 and median exchange counts 2, 3, 4, respectively.

All exact certificate and rounding checks pass. Algorithm~\ref{alg:round}
reaches the exact binary optimum in 119/144 cases, or 114/139 active cases.
The 25 active non-optimal rounded policies have median binary-optimality gap
approximately 0.017 and maximum approximately 0.064.

\paragraph{Deployment baselines.}
We compare Algorithm~1 with the three simple deployment rules defined
in Section~D.3. For each method we report the fraction of active cases
that attain the exact binary optimum, the mean normalized
binary-optimality gap
\[
    \frac{J^\star_{\rm bin}-J(y)}
         {J^\star_{\rm bin}-J(0)},
\]
and the median number of exact return evaluations. Random selection
reports the best of 100 uniformly sampled feasible $k$-subsets.
\begin{table}[t]
\centering
\caption{Deployment baselines on the 139 active structured cases.
Gap is the mean normalized binary-optimality gap and Queries is the
median number of return evaluations.}
\label{tab:baseline-comparison}
\small
\begin{tabular}{lccc}
\toprule
Method & Binary optimum & Gap & Queries \\
\midrule
Algorithm~1 & 114/139 (82.0\%) & 0.185\% & 6 \\
Top-$k$     &  61/139 (43.9\%) & 3.066\% & 9 \\
Greedy      & 125/139 (89.9\%) & 0.178\% & 26 \\
Random-100  &  98/139 (70.5\%) & 0.727\% & 100 \\
\bottomrule
\end{tabular}
\end{table}
Greedy attains the binary optimum most often, but at substantially
higher query cost. Algorithm~1 has nearly the same mean normalized
optimality gap while using fewer than one quarter as many median return
evaluations. Relative to the cheaper top-$k$ heuristic, Algorithm~1
more than halves the number of non-optimal deployments and reduces the
mean normalized gap by over an order of magnitude. The table measures
deployment quality only. Unlike the three heuristics,
Algorithm~1 is coupled to the pre-rounding certificate of
Theorem~3.3 and Proposition~3.4.
\subsection{Tightness and deployment diagnostics}
\label{app:deployment-diagnostics}

For each active structured case, let
\[
L_{\rm rnd}:=\pos{J(x_{\rm cand})-J(y)}
\]
be the realized rounding loss. In addition to the model-only \(\Bcert\) and the
candidate-specific pre-rounding certificate \(B_{\mathrm{cand}}\), we record the
post-trace pair-specific audit bound
\[
B_{\rm trace}
:=\sum_i\Delta_i\widehat\tau_i
 +\frac12\sum_t\widehat\beta_{i_tj_t}(-a_t)b_t.
\]
The proof of Proposition~\ref{prop:candidate-certificate} gives
\[
L_{\rm rnd}\le B_{\rm trace}\le B_{\mathrm{cand}}\le\Bcert.
\]
All 139 active cases satisfy this recorded hierarchy exactly. The revised theorem additionally places $B_{\mathrm{unif}}$ between $B_{\mathrm{cand}}$ and $\Bcert$, but its per-instance values and coverage are not reported here.

The median ratios to realized loss are
\[
\operatorname{med}\frac{\Bcert}{L_{\rm rnd}}=70.8,\qquad
\operatorname{med}\frac{B_{\mathrm{cand}}}{L_{\rm rnd}}=12.1,\qquad
\operatorname{med}\frac{B_{\rm trace}}{L_{\rm rnd}}=9.20.
\]
The median shrinkage \(\Bcert/B_{\mathrm{cand}}\) is 6.33. Candidate geometry reduces the exchange-count and support slack,
while the remaining gap is consistent with the global-versus-local curvature
mismatch of Section~\ref{sec:tightness}.

For deployment, define \(M_B(x)=J(x)-B-J(0)\). A positive margin certifies that
the rounded deployable policy improves over the baseline. Table~\ref{tab:app-deployment}
includes the post-trace audit in addition to the two pre-rounding certificates.

\begin{table}[h]
\caption{Deployment-improvement coverage on the 139 active structured cases. Direct is inferred from the existing exact-return records.}
\label{tab:app-deployment}
\centering
\small
\begin{tabular}{@{}lrrrrr@{}}
\toprule
Discount & Active & \(\Bcert\) & \(B_{\mathrm{cand}}\) & \(B_{\rm trace}\) & Direct\\
\midrule
\(\gamma=1/2\) & 68  & 67 & 68  & 68  & 68\\
\(\gamma=4/5\) & 71  & 0  & 35  & 54  & 71\\
\midrule
All             & 139 & 67 & 103 & 122 & 139\\
\bottomrule
\end{tabular}
\end{table}

Coverage is 48.2\% for the legacy bound, 74.1\% for the candidate bound,
and 87.8\% for the post-trace audit. At \(\gamma=4/5\), the corresponding counts are 0/71, 35/71, and
54/71. All 139 rounded policies actually improve over the baseline. A
nonpositive certified margin is therefore an abstention event in this suite.

\subsection{Direct verification baseline}
\label{app:direct-verification}

\paragraph{Protocol.}
The new experiment reruns the 20 closed-form cases in
Appendix~\ref{app:mechanistic-active}. Both methods receive the same candidate,
baseline return, and candidate return, and use the same deterministic pair and
endpoint selection. The pre-rounding tests evaluate $J(x)-B-J(0)>0$ for
$B\in\{\Bcert,B_{\mathrm{unif}},B_{\mathrm{cand}}\}$ before any exchange.
All candidates are then rounded, including those not certified, so both tests
refer to the same output. The direct test uses its cached exact return to
check $J(y)>J(0)$.
All arithmetic and comparisons use rational numbers.

\paragraph{Coverage and accounting.}
At $\gamma=0.2$, all three pre-rounding bounds certify all five dimensions.
At $\gamma=0.5$, only the budget-aware and candidate bounds certify them.
At $\gamma=0.8$ and $0.95$, none of these three bounds certifies an instance.
Direct verification certifies all 20 outputs.
There are 40 shared setup evaluations for candidate and baseline returns,
followed by 496 endpoint evaluations across 248 exchanges.
Direct verification uses the retained selected-endpoint values and adds no
return evaluations. Certificate values are computed from the family's closed forms. In a general run with no exchange and no cached output
return, a final policy evaluation may be needed.

\paragraph{Dominance under exact evaluation.}
The rounding guarantee gives $J(y)\ge J(x)-B$. Subtracting $J(0)$ shows
that the direct margin dominates the certified margin. Therefore, a positive
certified margin implies a positive direct margin. This observation applies
to any sound candidate-level loss bound for the same output, including a
post-trace bound. The pre-rounding guarantee is available before $y$ is constructed.
Under validated interval arithmetic, confirming a positive direct margin
may require refining return enclosures. Zero additional evaluation cost is
established here for the exact-rational runs with cached endpoint returns.

\paragraph{Structured-suite inference.}
The original records report $J(y)>J(0)$ under exact arithmetic in all 139
active cases. These records imply direct-verification coverage of 139/139,
including 68/68 at $\gamma=1/2$ and 71/71 at $\gamma=4/5$.
The table's Direct column reinterprets those recorded returns as a verification
baseline using the original experiment records. Structured-suite coverage is reported for the legacy, candidate, and trace bounds.

\subsection{Scaling and validated arithmetic}
\label{app:scaling-validated}

The scaling experiment uses a sparse ring MDP with \(S\) states and only the
first \(d\) rows editable. Transition probabilities are dyadic, so the binary64
model input is represented exactly. Axis A uses
\(S=d\in\{16,32,64,128\}\). Axis B fixes \(d=16\) and uses
\(S\in\{64,128,256\}\), directly testing the \(d\ll S\) regime. Each
configuration uses three fixed seeds. Large cases omit binary enumeration.

Validated floating point first solves the Bellman systems in binary64, then
applies outward residual corrections until every Bellman inequality is
satisfied. All subsequent interval operations are also rounded outward. Memory
is measured in an independent process for each run to avoid inherited
high-water marks. Table~\ref{tab:app-scaling} reports median runtime and peak
resident-set size.

\begin{table}[h]
\caption{Validated-certification scaling. Times are medians over three fixed
seeds. RSS is measured in independent processes.}
\label{tab:app-scaling}
\centering
\small
\begin{tabular}{@{}lrrrrr@{}}
\toprule
Axis & \(S\) & \(d\) & Total (s) & Interval (s) & RSS (MiB)\\
\midrule
A & 16  & 16  & 0.059 & 0.007 & 118.3\\
A & 32  & 32  & 0.223 & 0.030 & 118.7\\
A & 64  & 64  & 0.937 & 0.122 & 120.0\\
A & 128 & 128 & 4.448 & 0.553 & 125.7\\
\midrule
B & 64  & 16  & 0.215 & 0.012 & 118.5\\
B & 128 & 16  & 0.487 & 0.019 & 119.0\\
B & 256 & 16  & 1.311 & 0.031 & 120.3\\
\bottomrule
\end{tabular}
\end{table}

For numerical auditing we use six small cases with
\((S,d)\in\{(8,8),(16,16)\}\). Each binary64 validated envelope is reinterpreted
as an exact dyadic rational, and every Bellman inequality is then rechecked with
exact arithmetic. All 6/6 validated cases pass this audit. Recomputing the
interval primitives from the same endpoints using exact Fraction
arithmetic also confirms outward containment of the validated certificate in
all 6/6 cases. In contrast, the corresponding naive midpoint-FP64 solutions
pass the exact Bellman audit in 0/6 cases even though their printed certificate
values are numerically close to the validated values. At \(S=d=16\), the
validated implementation is approximately \(14.8\times\) faster than the
existing exact-rational envelope-and-certificate pipeline.

\subsection{General exchange study and complete costs}
\label{app:general-exchange-study}

\paragraph{New instances.}
This suite is separate from the original structured suite. We evaluate the full
Cartesian product of two transition structures, $d\in\{6,8\}$,
$\gamma\in\{1/2,4/5,19/20\}$, and seeds 11 and 29, giving 24 instances.
Each has $S=d+1$ states and budget $k=d/2$. The last state is an absorbing
target with reward one. A baseline bad-state row puts a seeded fraction
between $72/100$ and $90/100$ on itself and the remainder on one other bad
state. That successor follows a ring or is randomly selected.
The proposed row redirects a seeded fraction between $40/100$ and $65/100$
of its baseline transition mass to the target. Baseline bad-state rewards
belong to $\{0,1/100,2/100,3/100\}$, and proposed reward changes belong to
$\{0,-1/100,-2/100,-3/100\}$. The initial distribution is uniform over bad
states. The off-diagonal transitions couple the editable rows.
All configurations are retained. This suite is synthetic and designed to
exercise fractional rounding.

\paragraph{Candidates and exact references.}
Continuous search uses SLSQP with analytic gradients, three deterministic
starts, tolerance $10^{-10}$, and at most 150 iterations per start.
No binary optimum is supplied to the search. Coordinates are rounded to
denominator $10^6$ and any excess budget is removed in index order.
Every resulting candidate has strictly positive return advantage over the
exactly enumerated binary optimum. All subsequent returns, Bellman envelopes,
endpoint choices, low-rank updates, and integral enclosures use rational arithmetic.
Every one of the 135 exchanges is checked against direct policy evaluation
at both endpoints and an interior point. All rounded outputs and returns
agree between the full-solve and rank-two implementations.

\paragraph{Integral refinement and independent audit.}
The refinement target is $U-L\le0.05U+10^{-12}$ with a cap of 64 cells per
exchange. The target holds on every executed exchange, with 2,003 cells in
total. This tolerance concerns the curvature integral, not its ratio to actual
rounding loss. The standalone method uses $B_{\mathrm{loc}}$ and no global
envelopes. The hybrid method also computes global Bellman bounds and clips
each integral charge as defined in Appendix~\ref{app:rational-exchange}.
The two bounds coincide on this suite. All instances satisfy
$[J(x)-J(y)]_+\le B_{\mathrm{hyb}}\le B_{\mathrm{trace}}
\le B_{\mathrm{cand}}\le B_{\mathrm{unif}}$.
An independent audit uses 16 exchanges in fully dense five-state MDPs,
with strictly positive rational transition endpoints and signed rewards.
At five points per exchange, direct resolvent calculations agree exactly with
the rational representation for both return and second derivative, giving
80 exact checks. No absorbing-target structure is assumed in this audit.

\begin{table}[h]
\centering\small
\caption{New coupled suite. Ratios are medians of bound divided by actual
rounding loss. Coverage counts certified baseline improvements. Each discount
has eight instances. Local and hybrid bounds coincide here and use the trajectory.}
\label{tab:general-tightness}
\begin{tabular}{@{}rrrrrr@{}}
\toprule
$\gamma$ & $B_{\mathrm{cand}}/L_{\rm rnd}$ & $B_{\mathrm{trace}}/L_{\rm rnd}$
& $B_{\mathrm{loc}}/L_{\rm rnd}$ & Candidate & Local\\
\midrule
$1/2$ & 5.05 & 3.82 & 1.81 & 8/8 & 8/8\\
$4/5$ & 24.68 & 13.56 & 1.96 & 1/8 & 8/8\\
$19/20$ & 402.31 & 170.93 & 2.08 & 0/8 & 8/8\\
\bottomrule
\end{tabular}
\end{table}

\paragraph{Cost accounting and baselines.}
All methods use the same exact return arithmetic. Direct rounding solves a
policy-evaluation system for each endpoint. Its rank-two baseline uses the
same rational exchange coefficients and resolvent updates as local integration,
but omits derivative enclosures. Candidate top-$k$ selects the largest candidate
coordinates. Greedy starts from the baseline and adds the best positive-return
edit until the budget is reached or no improvement remains.
The global-certificate pipeline adds all $2d+2$ Bellman envelope problems
before rank-two rounding. Local integration needs no global preprocessing.
Hybrid integration includes both global preprocessing and local enclosures.
Candidate-based totals include continuous optimization, exact candidate and
baseline evaluations, and all subsequent computation. Greedy is timed from
the model and does not pay for continuous optimization. Final verification
reuses evaluated returns. Exact enumeration and independent audit evaluations
are reference checks and are excluded from method runtimes.
Each small instance is run three times with numerical linear-algebra libraries
restricted to one thread. We first take the median over repetitions for each
instance, then the median across instances.

\begin{table}[h]
\centering\small
\caption{Complete costs on the new 24-instance suite. The normalized gap is
$(J_{\rm bin}^\star-J(y))/(J_{\rm bin}^\star-J(0))$, averaged across instances.}
\label{tab:general-costs}
\begin{tabular}{@{}lrrr@{}}
\toprule
Method & Total (ms) & Binary optimum & Mean gap (\%)\\
\midrule
Direct rounding & 7.60 & 20/24 & 0.088\\
Rank-two direct & 9.43 & 20/24 & 0.088\\
Global certificate + rank-two & 19.14 & 20/24 & 0.088\\
Local integration & 36.19 & 20/24 & 0.088\\
Hybrid integration & 48.81 & 20/24 & 0.088\\
Candidate top-$k$ & 4.33 & 21/24 & 0.169\\
Greedy & 4.89 & 21/24 & 0.130\\
\bottomrule
\end{tabular}
\end{table}

The table quantifies the computation cost of tightening trajectory bounds.
All four rounding variants attain the same policy quality on this suite.

\paragraph{Exact scaling checks.}
We also test the ring construction at $(S,d)=(16,8),(32,8),(64,16)$,
with seed 11 and $\gamma=19/20$. Additional noneditable states retain their
baseline rows. All candidates are fractional and all exact rounding and bound
checks pass. Binary optima are not enumerated. These timing checks use one
run per configuration and are reported separately from the repeated small-suite
comparison. Table~\ref{tab:general-scaling} includes initial inversion and all
per-exchange work in each rounding time, but excludes continuous search.
Saved results also report full end-to-end times.

\begin{table}[h]
\centering\small
\caption{Exact scaling checks. Ratios use realized rounding loss. Rounding times
are in seconds and are single-run measurements.}
\label{tab:general-scaling}
\begin{tabular}{@{}rrrrrrr@{}}
\toprule
$S$ & $d$ & $B_{\mathrm{cand}}/L_{\rm rnd}$ & $B_{\mathrm{loc}}/L_{\rm rnd}$
& Direct & Rank-two & Local\\
\midrule
16 & 8 & 123.30 & 1.60 & 0.022 & 0.034 & 0.681\\
32 & 8 & 121.92 & 2.35 & 0.099 & 0.133 & 0.473\\
64 & 16 & 291.73 & 1.55 & 1.562 & 1.898 & 4.109\\
\bottomrule
\end{tabular}
\end{table}

The rational implementation provides exact numerical certification. Its runtime includes arithmetic on the accumulated numerators and denominators.